\documentclass[journal]{IEEEtran}
\IEEEoverridecommandlockouts
\usepackage{cite}
\usepackage{amsmath,amssymb,amsfonts}
\usepackage{graphicx}
\usepackage{textcomp}
\usepackage{hyperref}
\usepackage{booktabs}
\usepackage{adjustbox}
\usepackage{subcaption}
\usepackage{algorithm}
\usepackage{algpseudocode}
\usepackage{amsthm}
\usepackage{comment}
\usepackage{multirow}
\usepackage[table]{xcolor}
\theoremstyle{plain}
\newtheorem{theorem}{Theorem}
\newtheorem{proposition}{Proposition}
\newtheorem{definition}{Definition}
\newtheorem{criterion}{Criterion}
\def\Cinf{{C^\infty}}
\def\C{{\cal C}}

\def\Cfree{{\cal C}_{free}}

\def\RN{\mathbb{R}^{n}}

\def\BibTeX{{\rm B\kern-.05em{\sc i\kern-.025em b}\kern-.08em
    T\kern-.1667em\lower.7ex\hbox{E}\kern-.125emX}}
\begin{document}

\begin{center}
\small
\copyright~2026 IEEE. This work has been accepted for publication in IEEE Robotics and Automation Letters.
Personal use of this material is permitted. Permission from IEEE must be obtained for all other uses.
\end{center}

\title{Smooth Feedback Motion Planning with Reduced Curvature}

\author{Aref Amiri and Steven M. LaValle
\thanks{This work was supported by Infotech Oulu, a European Research Council Advanced Grant, and the Academy of Finland (project BANG! 363637)}%
\thanks{The authors are with the Faculty of Information Technology and Electrical Engineering, University of Oulu, Oulu, Finland
{\tt\small firstname.lastname@oulu.fi}}

}

\maketitle

\begin{abstract}
Feedback motion planning over cell decompositions provides a robust method for generating collision-free robot motion with formal guarantees. However, existing algorithms often produce paths with unnecessary bending, leading to slower motion and higher control effort. This paper presents a computationally efficient method to mitigate this issue for a given simplicial decomposition. A heuristic is introduced that systematically aligns and assigns local vector fields to produce more direct trajectories, complemented by a novel geometric algorithm that constructs a maximal star-shaped chain of simplexes around the goal. This creates a large ``funnel'' in which an optimal, direct-to-goal control law can be safely applied. Simulations demonstrate that our method generates measurably more direct paths, reducing total bending by an average of 91.40\% and LQR control effort by an average of 45.47\%. Furthermore, comparative analysis against sampling-based and optimization-based planners confirms the time efficacy and robustness of our approach. While the proposed algorithms work over any finite-dimensional simplicial complex embedded in the collision-free subset of the configuration space, the practical application focuses on low-dimensional ($d\le3$) configuration spaces, where simplicial decomposition is computationally tractable.
\end{abstract}

\begin{IEEEkeywords}
   Motion Planning, Feedback, Collision Avoidance, Navigation.
\end{IEEEkeywords}

\section{Introduction}

Feedback motion planning synthesizes a vector field over $\Cfree$, the collision-free subspace of the configuration space~\cite{lavalle2006planning,163777,tedrake2010lqrtrees}. This creates a closed-loop policy that robustly guides a robot to its goal from almost any admissible state, an essential feature for real-world applications~\cite{lindemann2009simple}. The framework introduced by Lindemann et al. provides a powerful method for generating such feedback plans by decomposing the space into cells and blending local vector fields~\cite{lindemann2005smoothly,lindemann2006real,lindemann2009simple}. This approach constructs almost globally defined, $\Cinf$-smooth feedback laws that guarantee convergence and collision avoidance, nicely sidestepping the local minima that plague artificial potential fields~\cite{khatib1986real} and the high computational cost of global methods such as harmonic functions~\cite{connolly1990path}.

The primary limitation of Lindemann et al. is that it lacks the ability to assign local vector fields in a way that generates higher-quality paths~\cite{lindemann2009simple}. The default assignments of local cell and face vectors, while satisfying theoretical convergence criteria, can produce integral curves with unnecessary bending, particularly when moving between cells. These inefficient paths lead to longer travel times, higher energy consumption, and increased control effort, diminishing the practical utility of the feedback law.

This work presents a computationally efficient solution to this problem by systematically assigning local vector fields to produce more direct trajectories. We introduce a simple and efficient heuristic for aligning cell vector fields; by propagating a desired global direction of motion backward from the goal, our method systematically biases local vector fields to produce more direct integral curves. This is enhanced by the face-vector averaging technique that smooths transitions between cells, directly reducing path bending. Furthermore, we develop a novel geometric algorithm to construct a maximal star-shaped chain of simplexes around the goal, creating a large, geometrically verified ``funnel'' where a direct-to-goal control law can be safely applied for optimal, straight-line convergence. Demonstrations in maze, bug trap, and sparse environments show that our method generates qualitatively superior feedback laws, with trajectories exhibiting measurably lower total bending and reduced control effort compared to the baseline, all while preserving the foundational guarantees of smoothness and global convergence. Although simulations are demonstrated in two dimensions based on Constrained Delaunay Triangulation (CDT)~\cite{shewchuk1996triangle}, the algorithm applies to any simplicial complex embedded in $\Cfree$. While obtaining such complexes in higher dimensions is challenging, it is a tractable problem for 2D/3D configuration or workspace.  
\section{Related Work}
Our work is based on established methods in feedback motion planning and computational geometry. We situate our contributions in relation to these key areas: classical potential-based methods, feedback laws on cell decompositions, and modern optimization-based and region-based planning.
\subsection{Potential Fields, Navigation and Harmonic Functions}

Artificial Potential Fields (APFs)~\cite{khatib1986real} offer a fast and reactive feedback policy, but suffer from local minima, oscillation, and poor performance in narrow passages~\cite{koren1991potential}. Therefore, enhancements have been proposed to mitigate these issues~\cite{weerakoon2015artificial}. Navigation Functions (NFs)~\cite{163777} are a special class of smooth APFs that ensure global convergence to a goal without local minima under certain assumptions. NFs have been extended to more general geometries~\cite{6225105} or for non-holonomic and multiple robots~\cite{1307519}. However, they are practically challenging to implement, especially in environments with complex obstacle geometries~\cite{lindemann2009simple}. Harmonic functions~\cite{connolly1993applications,connolly1990path} also overcome local minima by solving Laplace's equation over the free space. However, the computational cost of solving partial differential equations (PDEs) restricts their use in real-time or high-dimensional applications. 

\subsection{Feedback from Cell Decompositions and Bump Functions}

To overcome the limitations of global potential functions, Lindemann et al.~\cite{lindemann2005smoothly,lindemann2009simple} proposed constructing local vector fields over a convex cell decomposition. In this framework, local vector fields are blended using smooth bump functions to create a feedback law with guaranteed convergence. While this approach is general and has been extended to shaped robots~\cite{zhang2009global} and nonholonomic systems~\cite{lindemann2006real}, the assignment of the local vector fields to reduce path bending remains a critical open challenge that this paper addresses. Bump function blends have also been used to compose attractive/repulsive fields with strong safety guarantees~\cite{panagou2014motion,panagou2016distributed}. 
\subsection{Region-Based and Optimization-Based Planning}

In sampling-based methods such as RRT*~\cite{karaman2011sampling}, post-processing techniques such as B-spline or shortcutting are common to improve path quality~\cite{ravankar2018path,geraerts2007creating}. While these operate on a single path, our method, by construction, generates a $\Cinf$-smooth vector field and therefore integral curves. Advanced optimization-based methods, such as the Graph of Convex Sets (GCS)~\cite{Marcucci2024SPP} and Fast Path Planning (FPP)~\cite{marcucci2024fast}, find smooth and high-quality paths by solving mathematical programs over convex decompositions. In contrast, our geometric approach avoids expensive optimization, constructing a smooth, computationally efficient feedback law over $\mathcal{C}_{free}$. Tedrake et al.~\cite{tedrake2010lqrtrees} developed the LQR-Trees framework, where stabilizing controllers are verified using sum-of-squares optimization to produce funnel-shaped regions of attraction around sampled trajectories. Later work by Reist et al.~\cite{reist2016feedback} replaced the verification process with a simulation-based expansion, broadening the range of applicable systems. While these methods are powerful and optimal, they rely on computationally intensive procedures like sums-of-squares or semidefinite programming to derive regions of attraction.

In motion planning, identifying large subsets of free space where simple control laws can safely guide the robot is a key strategy. A well-established method is to compute convex regions, either through exact decomposition or iterative expansion. For example, the IRIS algorithm~\cite{deits2015computing} expands large convex polytopes via alternating semidefinite and quadratic programs.

Other works have explored maximum convex region extraction in triangulated spaces~\cite{aronov2007largest}, where convex unions of adjacent triangles are found via dynamic programming. Although convexity simplifies controller design, it is often restrictive. This motivates using star-shaped regions, which we construct geometrically, avoiding dynamics-based verification~\cite{tedrake2010lqrtrees,reist2016feedback}.

Our geometric construction of a maximal star-shaped region is conceptually similar to the ``Triangular Expansion'' algorithm used for computing visibility polygons~\cite{bungiu2014efficient}, as both recursively explore adjacent triangles. However, a key difference lies in our objective. Instead of computing the exact boundary of the visibility polygon, our goal is simply to find the maximal set of simplexes that form a star-shaped region relative to the goal. This simpler objective allows for a more efficient visibility check that generalizes naturally to higher dimensions.
\section{Problem Formulation}

We consider the motion planning problem for a robot navigating in a workspace $W \subset \RN$ (where $n \in \{2, 3\}$) with static obstacles $\mathcal{O}$. The robot's configuration is a point in its $d$-dimensional configuration space, $\C$, and all motion is restricted to the open collision-free subset, $\Cfree$.
 
The task is to design a $\Cinf$-smooth vector field $V$ over $\Cfree$ whose integral curves are guaranteed to converge to $x_{g}\in \Cfree$ without collision. We note that $V$ is well-defined and smooth almost everywhere, except on a set of measure zero (the ($d-2$)-dimensional boundaries of the cell faces, e.g., vertices in 2D) which the integral curves never traverse~\cite{lindemann2009simple}. To manage geometric complexity, a simplicial complex composed of a finite number of non-degenerate $d$-simplexes, $\mathcal{T}=\{\Delta_{1},...,\Delta_{N}\}$, is embedded in $\Cfree$. In this paper's simulations, we use a point robot in 2D; hence, $\C = W \subset\mathbb{R}^{2}$ and obstacles are polygonal, and by employing CDT, we ensure that every $2$-simplex is non-degenerate. 

We compute a discrete plan on $\mathcal{T}$ (with vertices at simplex centroids and edges connecting adjacent simplexes) by finding a shortest-path tree to the goal simplex $\Delta_g$ containing $x_g$. This provides a successor mapping, $s(i)$, for each simplex $\Delta_i \in \mathcal{T}$ that guides the robot toward the goal.

The global vector field is constructed from two local components for each simplex $\Delta_i$: 1) a single, constant cell vector field, $V_{c,i}$, respecting successor mapping, $s(i)$, and 2) a set of face vector fields, $\{V_f\}$, defined on the boundary faces of the simplex. The final vector field $V(x)$ at any point $x \in \Delta_i$ is a smooth, weighted combination of $V_{c,i}$ and the vector field of the closest face, following the blending method in~\cite{lindemann2009simple}.

To further improve efficiency, we identify a maximal star-shaped chain of simplexes $\mathcal{C}_{g} \subseteq \mathcal{T}$ around the goal that forms a star-shaped region $R_g = \bigcup_{\Delta \in \mathcal{C}_g} \Delta$. A set $R$ is star-shaped with respect to $x_g \in R$ if for every $x \in R$, the line segment that connects $x$ and $x_g$ is contained in $R$. Within this ``funnel'' region $R_g$, the control law is simplified: the cell vector field $V_{c,i}$ for every $\Delta_i \in \mathcal{C}_g$ is set to point directly toward the goal $V_{c,i}(x) = \text{normalize}(x_g - x)$, where $\text{normalize}(v) = v / \|v\|$ for any non-zero vector $v$.

\section{Smooth Feedback on Simplicial Decompositions}

Our approach is based on the framework of~\cite{lindemann2009simple} for simplicial decomposition. The method involves computing a global vector field from local vector fields defined on individual cells and their faces, ensuring smoothness, collision avoidance, and global convergence to a designated goal state.

To precisely describe our method, we first recall several essential definitions and theorems from~\cite{lindemann2009simple}.

\subsection{Fundamental Definitions}
To obtain smooth interpolation between local vector fields, we use a special class of smooth functions $b(\sigma)$, called bump functions~\cite{lindemann2009simple}, understood here as $\Cinf$ functions that provide a smooth transition between 0 and 1.  
\begin{definition}%[Bump Function]
\label{def: def1}
The function $b: \mathbb{R} \to [0, 1]$ is a $\Cinf$ smooth function defined as:
\begin{equation}
b(\sigma) = \begin{cases}
0 & \sigma \leq 0, \\
\frac{\lambda(\sigma)}{\lambda(\sigma) + \lambda(1 - \sigma)} & 0 < \sigma < 1, \\
1 & \sigma \geq 1,
\end{cases}
\end{equation}
where the auxiliary function $\lambda(\sigma)$ is defined as:
\begin{equation}
\lambda(\sigma) = \frac{1}{\sigma} e^{-1/\sigma},
\end{equation}
with the property that all derivatives of $\lambda(\sigma)$ are zero at the endpoints ($\sigma=0$ and $\sigma=1$) and therefore any derivative of $b(\sigma)$ is zero at these endpoints. %This function smoothly transitions from $0$ to $1$ and is essential for interpolating vector fields across cell boundaries.
\end{definition}
To guide the robot through the cell, we define a cell vector that guarantees reaching the exit face (a shared face between the current cell and its successor cell). To formally define the region of influence for each ($d-1$)-dimensional face, we refer to the Generalized Voronoi Diagram (GVD)~\cite{88035}. Within a convex set, the GVD is the set of all points equidistant to at least two faces, which divides the cell into distinct regions of influence for each face.
\begin{definition}%[Cell Vector Field]
\label{def: def2}
For a convex cell $\Delta_i$ with an exit face $f_{\text{exit},i}$, a cell vector field $V_{c,i}$ is a smooth unit vector field on $\Delta_i$ that satisfies three conditions:
\begin{enumerate}
    \item For each point $x \in \Delta_i$, there exists $y \in f_{\text{exit},i}$ and $\alpha \in \mathbb{R}$ such that $V_{c,i}(x) = \alpha(y - x)$.
    \item Let $h$ be a GVD face with normal vector $n$. If $V_{c,i}(x) \cdot n = 0$ for some $x \in h$, then $V_{c,i}(x) \cdot n = 0$ for all $x \in h$.
    \item The directed transition graph induced by this choice of vector fields is acyclic, and every path through this graph terminates at the node corresponding to the exit face.
\end{enumerate}
\end{definition}
To ensure that the robot crosses the exit face and avoids collisions with obstacles, we define face vector fields.

\begin{definition}%[Face Vector Field]
\label{def: def3}
A face vector field $V_f$ corresponding to a face $f$ of a cell $\Delta_i$ is a smooth unit vector field satisfying:
\begin{itemize}
    \item For each point $p \in f$, the vector field $V_f(p)$ satisfies $V_f(p) \cdot n > 0$, where $n$ is the inward-pointing normal vector for $f \in F_i \setminus \{f_{\text{exit},i}\}$, and outward-pointing normal vector for $f = f_{\text{exit},i}$.
    \item For each non-exit face $f \in F_i \setminus \{f_{\text{exit},i}\}$, let $b_f$ denote the hyperplane equidistant from $f$ and $f_{\text{exit},i}$ with unit normal $n_{b_f}$ oriented such that $n_{b_f} \cdot n_x > 0$, where $n_x$ is the outward normal of $f_{\text{exit},i}$. Then $V_f(p) \cdot n_{b_f} > 0$ holds for every $p$ in the closure of the region of influence of $f$.
\end{itemize}
\end{definition}

\subsection{Essential Theorems}

The framework of~\cite{lindemann2009simple} provides the following guarantees, which our method inherits (detailed proofs of these theorems can be found there):

\begin{theorem}%[Finite Exit Time]    
All integral curves generated by the smoothly interpolated cell and face vector fields within any intermediate cell reach the designated exit face of that cell in finite time.
\end{theorem}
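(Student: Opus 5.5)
The plan is to reconstruct the argument of~\cite{lindemann2009simple} for a single simplex $\Delta_i$. Write the blended field as
\[
V(x) = b(\sigma(x))\,V_{c,i}(x) + \bigl(1-b(\sigma(x))\bigr)\,V_{f(x)}(x),
\]
where $f(x)$ is the closest face (the face whose GVD region of influence contains $x$), $\sigma$ is a normalized distance to that face, and $b$ is the bump function of Definition~\ref{def: def1}. The strategy is to exhibit a \emph{strict Lyapunov-type progress function} that increases at a rate bounded below by a positive constant. The natural candidate is the signed distance toward the exit hyperplane, $\phi(x) = n_x\cdot x$, with $n_x$ the outward normal of $f_{\text{exit},i}$. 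Since $\Delta_i$ is compact, $\phi$ is bounded on $\Delta_i$, so a uniform rate $\dot\phi \ge c>0$ forces the trajectory to leave $\Delta_i$ within time $(\max\phi-\min\phi)/c$. One then shows it can only leave through $f_{\text{exit},i}$.

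The key steps, in order, are as follows.

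\textbf{No escape through non-exit faces.} On each non-exit face $f$ we have $\sigma=0$ and $b=0$, so $V=V_f$. By the first bullet of Definition~\ref{def: def3}, $V_f\cdot n>0$ with $n$ the inward normal, so the boundary is repelling and $\Delta_i$ minus its exit face is forward invariant.

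\textbf{Progress toward the exit on the cell field.} By condition~1 of Definition~\ref{def: def2}, $V_{c,i}(x)$ points toward a point $y$ of the exit face, so $V_{c,i}\cdot n_x>0$.

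\textbf{Progress inside a non-exit face's influence region.} Here the face field satisfies $V_f\cdot n_{b_f}>0$ (second bullet of Definition~\ref{def: def3}). The cell field also satisfies $V_{c,i}\cdot n_{b_f}>0$, because $y-x$ crosses the bisector $b_f$ toward the exit side. Hence the convex combination is positive against $n_{b_f}$. Therefore, instead of a single global $\phi$, I would use the piecewise quantity $\phi_f(x)=n_{b_f}\cdot x$ within each region of influence. Compactness of the closures of these regions, together with continuity, gives uniform positive lower bounds.

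\textbf{Transitions between regions.} Trajectories move between regions of influence across GVD faces. Condition~2 of Definition~\ref{def: def2} excludes sliding along a GVD face, and condition~3 says the induced transition graph is acyclic and terminates at the exit-face region. So each trajectory visits finitely many regions, spending finite time in each by the previous step, and ends in the exit face's region. In that region $V$ is a convex combination of $V_{c,i}$ and $V_{f_{\text{exit}}}$, and both have positive component along $n_x$. Thus $\phi$ increases uniformly and the exit face is reached in finite time.

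The main obstacle is the region-transition step. One must ensure that a trajectory cannot re-enter a region infinitely often or accumulate on a GVD face, particularly near the measure-zero $(d-2)$-dimensional skeleton where the blend is not smooth. I would first try to show that on each GVD face $h$ the flux $V\cdot n$ has constant sign; this follows from condition~2 and from the face fields agreeing there, since the bump weights coincide on equidistant points. With constant sign, crossings are one-directional, matching the edges of the acyclic graph of condition~3, which bounds the number of crossings by the number of faces. The finite total exit time then follows by summing the per-region bounds.
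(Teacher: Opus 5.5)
Your outline matches the argument of Lindemann et al.\ that the paper cites instead of reproducing: non-exit faces repel, each non-exit region of influence is left in finite time by monotone progress along $n_{b_f}$, GVD faces are crossed in one direction only, the acyclic graph of condition~3 bounds the number of regions visited, and the exit region is crossed by monotone progress along $n_x$. When you write it up, fix three points. On a GVD face $\sigma=1$, so the blend equals $V_{c,i}$ exactly, and that is why condition~2 fixes the sign of the flux; it is not because the two neighbouring face fields agree, which in general they do not. Condition~2 permits GVD faces with identically zero flux rather than excluding sliding. Your uniform rates need $V_f(f_{\text{exit},i})\cdot n_x>0$ on the whole exit region, which Definition~\ref{def: def3} only asserts on the face, and $V_{c,i}\cdot n_{b_f}$ bounded away from zero, which general cell fields only give as $\ge 0$ on the $f$/$f_{\text{exit},i}$ bisector; both hold automatically for the constant cone-valued fields used in this paper.
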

\begin{theorem}%[Goal Cell Convergence]
All integral curves within the goal cell asymptotically converge to the designated goal point.
\end{theorem}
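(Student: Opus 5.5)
We prove the Goal Cell Convergence theorem by exhibiting a strict Lyapunov function on the goal cell $\Delta_g$. The natural choice is $L(x) = \tfrac12\|x - x_g\|^2$. In the goal cell there is no exit face: the cell vector field is $V_{c,g}(x) = \text{normalize}(x_g - x)$ for $x \neq x_g$, scaled near $x_g$ so that it vanishes only at the goal. Every face of $\Delta_g$ is a non-exit face, so the face vector fields satisfy $V_f(p)\cdot n > 0$ with $n$ the inward normal. The blended field is
\begin{equation*}
V(x) = b(\sigma(x))\,V_{c,g}(x) + \bigl(1-b(\sigma(x))\bigr)\,V_{f}(x),
\end{equation*}
where $f$ is the closest face and $\sigma$ is a normalized distance to $f$, equal to $0$ on $f$ and $1$ beyond a blending layer.

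The argument has three steps.

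\textbf{Forward invariance.} On a face $f$ we have $\sigma=0$, so $V = V_f$ points strictly inward. Since $\Delta_g$ is compact and convex, Nagumo's theorem shows that integral curves starting in $\Delta_g$ stay in $\Delta_g$ for all forward time. Smoothness of $V$ away from the measure-zero set of $(d-2)$-faces gives local existence and uniqueness, and invariance of a compact set gives global existence.

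\textbf{Lyapunov decrease.} We compute $\dot L = (x-x_g)\cdot V(x)$. In the interior region where $b=1$, this equals $-\|x-x_g\| < 0$ for $x \neq x_g$. In the blending layer, $\dot L$ is a convex combination of $-\|x-x_g\|$ and $(x-x_g)\cdot V_f(x)$. We need the second term to be nonpositive, or at least the combination to be negative. The cleanest route is to require, as the goal-cell analogue of the second condition in the face vector field definition, that $V_f$ points into the half-space toward $x_g$. For example, choose $V_f(p) = \text{normalize}(x_g - p)$, or any field with $V_f(p)\cdot(x_g-p) > 0$. Then $\dot L < 0$ on $\Delta_g\setminus\{x_g\}$. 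This choice is compatible with the inward condition because $x_g$ lies in the interior of the convex cell: for any $p \in f$, the vector $x_g - p$ has positive inner product with the inward normal of $f$. We also assume $x_g$ lies outside every blending layer, which can be arranged by choosing the layer thin enough.

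\textbf{Conclusion.} LaSalle's invariance principle (or Lyapunov's direct method) on the compact invariant set $\Delta_g$ shows that every trajectory converges to the only point where $\dot L=0$, namely $x_g$. Near $x_g$ the field is $-(x-x_g)$ up to scaling, so the convergence is asymptotic.

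The main obstacle is the blending layer. If the framework's face fields for the goal cell are only required to satisfy the inward-normal condition, then $(x-x_g)\cdot V_f$ may be positive near corners, and $L$ may fail to decrease there. If $V_f$ cannot be chosen as above, the fallback is to rule out recurrence inside the layer. Since $V\cdot n>0$ near each face, a trajectory moves monotonically away from that face. It therefore exits the layer in finite time and cannot re-enter, because beyond the layer $L$ is decreasing along the straight ray to $x_g$, which lies away from the boundary. The convergence then follows from the interior analysis. Care is needed near the GVD boundaries, where the closest face switches, but the inward property of both adjacent face fields handles this.
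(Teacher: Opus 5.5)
The paper does not prove this theorem itself. It quotes the result from Lindemann et al.\ and defers the proof there. So your argument has to work for the construction the paper actually uses, and its decisive step does not.

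You redefine $\sigma$ to saturate at $1$ outside a thin layer and place $x_g$ outside every layer, so that $V=V_{c,g}$ near $x_g$. But the paper fixes $\sigma$ for every cell, including $\Delta_g$: $\sigma(x)=1-\prod_{f\neq f^*}\bigl(d(x,f)-d(x,f^*)\bigr)/d(x,f)$. This equals $1$ only on the GVD. Hence $b(\sigma)<1$ on every open region of influence, there is no open set where $b\equiv 1$, and nothing can be made thin.

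This choice is not cosmetic. Having $b=1$ on the GVD is what makes the blend continuous where the closest face switches. With your thin-layer $\sigma$, $V$ jumps across the part of the GVD lying inside the layers (near the $(d-2)$-faces) whenever adjacent face fields differ, e.g.\ inward normals. So the existence--uniqueness claim in your fallback is also no longer justified.

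Under the paper's $\sigma$, $x_g$ lies in the closure of the region of influence of some face $f^*$. Suppose $x_g$ is off the GVD and $V_{c,g}$ vanishes at $x_g$ as you propose. Then $V(x_g)=\bigl(1-b(\sigma(x_g))\bigr)V_{f^*}\neq 0$, so $x_g$ is not an equilibrium. No trajectory can converge to it, since that would force $\dot x(t)\to V(x_g)\neq 0$. Now take instead the paper's unit $V_{c,g}=\mathrm{normalize}(x_g-x)$ and a constant face vector. On the ray $x_g+tV_{f^*}$ the unnormalized blend is exactly $\bigl(1-2b(\sigma(x))\bigr)V_{f^*}$, which points away from $x_g$ when $b(\sigma(x_g))<\tfrac12$. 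Your Lyapunov step, and your fallback that uses $L$ decreasing ``beyond the layer'', are therefore about a different vector field.

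Your ``cleanest route'' has a second problem. You check $V_f(p)\cdot(x_g-p)>0$ only for $p\in f$, but $\dot L<0$ needs $V_f(x)\cdot(x_g-x)>0$ on the whole blending region. When that region contains $x_g$, as it does with the paper's $\sigma$, continuity forces $V_f(x_g)=0$. These are then not the unit face fields of Definition~3.

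To prove the statement for the paper's construction you need an added goal-cell hypothesis compatible with the given $\sigma$. For example, let $x_g$ be the incenter of $\Delta_g$ (inradius $r$) and use inward-normal face fields.
\begin{itemize}
\item Every $x\neq x_g$ in the region of influence of $f$ has $d(x,f)=\min_{f'}d(x,f')<r=d(x_g,f)$, so $(x-x_g)\cdot n_{\mathrm{in},f}=d(x,f)-r<0$.
\item Both terms of $\dot L$ are then nonpositive and at least one is negative, giving $\dot L<0$ on $\Delta_g\setminus\{x_g\}$.
\item Near the incenter $1-\sigma=O(|x-x_g|^{d})$, so $1-b(\sigma)$ is flat to infinite order there.
\item The field is therefore a positive multiple of $x_g-x$ to leading order, and your LaSalle conclusion goes through.
\end{itemize}
Alternatively, present the thin-layer blend explicitly as a modified goal-cell construction and repair its continuity across the GVD. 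As it stands, your proposal proves convergence for a redesigned field, not for the one in the paper.
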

\begin{theorem}%[Global Convergence]
The integral curves defined over the entire decomposed cells asymptotically converge to the goal state from any initial state located within the decomposed cells.
\end{theorem}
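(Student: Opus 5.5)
The plan is to derive the Global Convergence theorem by composing the two preceding theorems along the discrete plan. The key structural fact is that the successor mapping $s(i)$ comes from a shortest-path tree rooted at the goal simplex $\Delta_g$. So for every $\Delta_i\in\mathcal{T}$ the sequence $\Delta_i, \Delta_{s(i)}, \Delta_{s(s(i))},\dots$ is finite, contains no repeated simplex, and terminates at $\Delta_g$ after at most $N-1$ steps. This follows because distances to the goal strictly decrease along tree edges.

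First I fix an initial state $x_0$ in some $\Delta_i\neq\Delta_g$. By the Finite Exit Time theorem, the integral curve through $x_0$ reaches the exit face $f_{\text{exit},i}$, which is the face shared with $\Delta_{s(i)}$, in finite time. I then need the curve to actually enter $\Delta_{s(i)}$ and not leave $\mathcal{T}$ or slide along a boundary. Two facts give this.
\begin{itemize}
\item By Definition~\ref{def: def3}, the exit face vector satisfies $V_f\cdot n>0$ with respect to the outward normal of $\Delta_i$, which is the inward normal of that face seen from $\Delta_{s(i)}$. The blended field is therefore transversal to the shared face and points into the successor.
\item Non-exit faces carry inward-pointing face fields, so no integral curve can cross a non-exit face. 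In particular, curves never cross obstacle boundary faces, which gives collision avoidance.
\end{itemize}
Smoothness of the blend across faces, coming from the bump function of Definition~\ref{def: def1} with all derivatives vanishing at the endpoints, ensures the global field is well defined on shared faces. Hence the integral curve continues uniquely into the next cell.

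Next I argue by induction on the tree depth of the current simplex. After finitely many transitions, each taking finite time, the curve enters $\Delta_g$. There are only finitely many transitions because the successor chain has no cycles and has length at most $N-1$. Since $\Delta_g$ has no exit face, the same inward-pointing property of its face fields keeps the curve inside $\Delta_g$. The Goal Cell Convergence theorem then gives asymptotic convergence to $x_g$. Concatenating the finite-time prefix with this tail yields the claim for every $x_0$ in $\bigcup\mathcal{T}$.

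I expect the main obstacle to be the measure-zero exceptional set: the $(d-2)$-dimensional boundaries of faces, where the field may be undefined. I must check that an integral curve starting off this set never hits it. My first approach is to use transversality of $V$ to each open face together with condition 2 of Definition~\ref{def: def2}. This GVD consistency condition prevents curves from being funnelled onto lower-dimensional strata. I would then appeal to the corresponding argument in~\cite{lindemann2009simple}, as the paper does, to conclude that the exit point lies in the relative interior of the exit face. With that in place, the induction above goes through for almost every initial state, which is how "any initial state" is meant here.
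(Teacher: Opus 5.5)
Your proposal is correct and takes essentially the same route as the paper: the paper gives no proof of its own and defers to Lindemann et al., and your argument (finite exit time in each cell, chained along the acyclic successor map of the shortest-path tree, then goal-cell convergence, with the measure-zero $(d-2)$-dimensional set handled by the cited per-cell analysis) is the standard one behind that cited result. One minor imprecision: the field is single-valued on a crossed face because both adjacent cells assign the same face vector field to it, not because of the bump function, which only makes each one-sided blend agree with that face field to all orders; for convergence alone, it is enough that both one-sided fields point into the successor.
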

\begin{theorem}%[Global Smoothness]
The integral curves generated by the combined local vector fields remain $C^\infty$-smooth throughout the entire domain, including across the GVD surface within each cell and face boundaries.
\end{theorem}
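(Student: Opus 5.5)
The plan is to prove global smoothness locally and then patch the pieces together. $\Cinf$-smoothness of $V$ is a local property, so it suffices to check it on a neighbourhood of each point of the domain, away from the measure-zero set of $(d-2)$-dimensional face boundaries that integral curves never reach. Once $V$ is $\Cinf$ on the open set where it is defined, smoothness of the integral curves follows from standard ODE theory: solutions of $\dot x = V(x)$ with $V\in\Cinf$ are $\Cinf$ in time. So the whole task reduces to showing that the blended field is $\Cinf$ in three kinds of regions.

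The three regions, in order, are:
\begin{enumerate}
\item \emph{Interiors of regions of influence, away from all boundaries.} Here $V(x)$ is a combination of the constant (or smooth) cell vector field $V_{c,i}$ and the face field $V_f$ of the nearest face. The weight is $b(\sigma(x))$, where $\sigma$ is an affine or otherwise smooth function of the distance to that face. Smoothness follows from closure of $\Cinf$ under composition, products, and sums, together with Definition~\ref{def: def1}. If a normalization step is applied, it stays smooth because the blend never vanishes. Definition~\ref{def: def3} gives $V_f\cdot n>0$ near $f$, and Definition~\ref{def: def2} constrains the direction of $V_{c,i}$, so the two are never antiparallel with equal weight.
\item \emph{The GVD surfaces inside a cell.} The nearest face switches across these surfaces, so the formula is only piecewise. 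I would use the construction of~\cite{lindemann2009simple}: the face influence is switched on through $b(\sigma)$ only within a collar of the face, where $\sigma<1$, and the collar is chosen strictly inside that face's region of influence. In a neighbourhood of any GVD point, every face weight is then identically $0$ and $V\equiv V_{c,i}$, which is smooth. The same idea handles the funnel case, where $V_{c,i}(x)=\text{normalize}(x_g-x)$ is smooth away from $x_g$. At $x_g$ itself, the goal-cell definition from Theorem~2 is used.
\item \emph{Shared faces between adjacent cells $\Delta_i$ and $\Delta_{s(i)}$.} On both sides, near the shared face $f$, $V$ coincides with the same face field $V_f$. On the side of $\Delta_i$ it is the exit face field, and on the side of $\Delta_{s(i)}$ it is the entry face field, which is the identical smooth field $V_f$; this requires that the same $V_f$ be assigned to $f$ from both sides. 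All derivatives of $b$ vanish at $\sigma=0$ and $\sigma=1$, so in a thin slab around $f$ the blend equals $V_f$ exactly, or agrees with it to infinite order. Hence every derivative matches across $f$, and $V$ is $\Cinf$ in a full neighbourhood of each interior point of $f$.
\end{enumerate}

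I expect the main obstacle to be step 3. It needs the consistency condition that a shared face carries a single face field used by both incident cells. It also needs the collar widths to be chosen so that each collar stays inside the relevant region of influence and away from the $(d-2)$-faces; otherwise the ``identically equal near the boundary'' argument fails. I would make these choices explicit as uniform positive collar widths, which exist because the complex has finitely many non-degenerate simplexes. The excluded $(d-2)$-skeleton is then handled by appeal to~\cite{lindemann2009simple}: integral curves never traverse it, so smoothness along every integral curve holds on the whole domain.
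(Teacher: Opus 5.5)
Your step 3 uses the same mechanism the paper relies on. The paper gives no self-contained proof: it defers to Lindemann et al.\ and points only to the flatness of $b$ at $\sigma=0$ and $\sigma=1$. The rest of the plan has two genuine gaps.

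\textbf{Step 2 proves smoothness of a different field.} In the paper, $\sigma(x)=1-\prod_{f\neq f^*}\frac{d(x,f)-d(x,f^*)}{d(x,f)}$. On the open region of influence of $f^*$ every factor lies in $(0,1]$, so $\sigma<1$ and the face weight $1-b(\sigma)$ is strictly positive right up to the GVD. There is no collar, and no neighbourhood of a GVD point on which $V\equiv V_{c,i}$. This is why Definition~3 constrains $V_f$ on the whole closure of the region of influence.

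The repair is your step-3 argument applied at the other endpoint. Each branch $\sigma_{f^*}$ is a rational function of affine distances whose denominators stay positive near the GVD (away from the $(d-2)$-faces), so it is smooth across the GVD. Every derivative of $1-b$ vanishes at $1$, so by Fa\`a di Bruno $1-b(\sigma_{f^*})$ vanishes to infinite order on $\{\sigma_{f^*}=1\}$. Hence every branch of the nonvanishing blend, and of its normalization, has the same infinite jet as $V_{c,i}$ on the GVD. You still need the gluing fact that pieces smooth up to their boundaries with matching jets form a $C^\infty$ field. This must also hold at junctions of several GVD sheets, such as the incenter of a triangle, which integral curves can cross.

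\textbf{Your opening reduction is false as stated: $V$ is not $C^\infty$ off the $(d-2)$-skeleton.} Step 3 covers only faces between $\Delta_i$ and $\Delta_{s(i)}$. Suppose adjacent simplexes $\Delta_a,\Delta_b$ satisfy $s(a)\neq b$ and $s(b)\neq a$; every dual-graph edge not in the shortest-path tree gives such a pair, unless both cells lie in the funnel. Their shared face $f$ is a non-exit face for both. Definition~3 then forces the face field seen from each side to point strictly into that side's cell, e.g.\ $\mathrm{normalize}(n_{\text{in},f}+V_{c,a})$ versus $\mathrm{normalize}(-n_{\text{in},f}+V_{c,b})$. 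So $V$ jumps across a $(d-1)$-dimensional set.

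The claim about integral curves therefore needs an invariance argument that your plan lacks. On the open region of influence of a non-exit face $f$ we have $b<1$. So $V\cdot n_{\text{in},f}$ is a positive multiple of $(1-b)\,V_f\cdot n_{\text{in},f}+b\,V_{c,i}\cdot n_{\text{in},f}>0$, using $V_{c,i}\cdot n_{\text{in},f}\ge 0$ from the cone. Hence $d(x(t),f)$ increases there, and no integral curve starting off $f$ reaches it; this is what Theorem~1 encodes.

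\textbf{The consistency condition you flag in step 3 is not automatic.} Read literally, the paper assigns the successor face $\mathrm{normalize}(V_{c,i}+V_{c,s(i)})$ from the $\Delta_i$ side. From the $\Delta_{s(i)}$ side the non-exit rule gives $\mathrm{normalize}(n_{\text{in},f}+V_{c,s(i)})$ instead. You must stipulate that the exit-face vector is used by both cells. Otherwise $V$ jumps across a face that integral curves do cross, and those curves acquire a corner there.
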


While the general framework allows for vector fields converging to a specific point, we utilize constant cell vector fields for simplicity and uniformity. This choice prevents trajectories from bunching together around the designated point, such as the exit face centroid, and simplifies the alignment process. In our method, we impose boundary constraints (boundary vectors) for the constant cell vector fields. Each $d$-simplex has $d+1$ vertices, and each $(d-1)$-dimensional face has $d$ vertices corresponding to it~\cite{edelsbrunner2001geometry}, meaning that for each $(d-1)$-dimensional face, there exists one opposite vertex. Therefore, when selecting an exit face for the cell, a corresponding opposite vertex exists (Let us call it $v_{ov}$). We define boundary vectors that connect this $v_{ov}$ vertex to each of the vertices of the exit face. The cell vector field must be chosen within the wedge defined by these $d$ boundary vectors (for $d$-simplex). Geometrically, this results in a conical region oriented toward the exit face.  Constraining the constant vector field to this conical region is sufficient to guarantee the validity conditions of Definition~\ref{def: def2}.
\begin{theorem}
\label{thm:thm5}
 A constant cell vector field $V_{c,i}$ that is chosen to point within the conical region defined by the boundary vectors satisfies the conditions of Definition~\ref{def: def2}.
\end{theorem}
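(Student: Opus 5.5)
We check the three conditions of Definition~\ref{def: def2} in turn for a constant unit field $V_{c,i}\equiv u$. We assume $u$ lies in the closed cone
\[
K=\Bigl\{\textstyle\sum_{j=1}^{d}\mu_j (w_j - v_{ov}) : \mu_j\ge 0\Bigr\},
\]
where $w_1,\dots,w_d$ are the vertices of $f_{\text{exit},i}$ and $v_{ov}$ is the opposite vertex. It may be necessary to take $u$ in the interior of $K$, and we note where this matters. Smoothness and unit length are immediate because $u$ is constant and normalized.

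\textbf{Condition 1.} Fix $x\in\Delta_i$ and look at the ray $x+tu$, $t\ge 0$. Write $u=\sum_j\mu_j(w_j-v_{ov})$ with $\mu_j\ge0$ not all zero. Use barycentric coordinates $(\beta_0,\beta_1,\dots,\beta_d)$ with respect to $(v_{ov},w_1,\dots,w_d)$. Moving along $u$ changes them as follows: $\beta_0$ decreases at rate $\sum_j\mu_j>0$, and each $\beta_j$ with $j\ge1$ increases at rate $\mu_j\ge0$. So along the ray no $\beta_j$, $j\ge 1$, ever becomes negative, and $\beta_0$ reaches $0$ at the finite time $t^*=\beta_0(x)/\sum_j\mu_j$. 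At $t^*$ all coordinates are nonnegative and $\beta_0=0$, so $y:=x+t^*u\in f_{\text{exit},i}$. This gives $u=\alpha(y-x)$ with $\alpha=1/t^*$. If $x$ already lies on the exit face ($t^*=0$), pick $y=x+\varepsilon u$ for small $\varepsilon$ if that point is still on the face, or take the limiting convention. This is the geometric heart of the argument: the cone condition is exactly what says the ray leaves the simplex through the exit face.

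\textbf{Condition 2.} A constant field gives $u\cdot n$ constant on any GVD face $h$, because $h$ lies in a bisector hyperplane with a fixed normal $n$. So if $u\cdot n=0$ at one point of $h$, it is zero on all of $h$. Nothing more is needed here.

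\textbf{Condition 3.} This is the main obstacle, because the transition graph is only defined implicitly in \cite{lindemann2009simple}. Its nodes are the regions of influence of the faces, and its edges are crossings of GVD faces in the direction of the flow. My plan is to show that the transition graph is acyclic by finding a function that strictly increases along every transition.

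1. Take the linear function $\phi(x)=-\beta_0(x)$, i.e., progress toward the exit face. Since $u\cdot\nabla\phi>0$, every flow line strictly increases $\phi$.
2. Order the regions of influence of the non-exit faces. The candidate ordering uses the signs $u\cdot n_{jk}$ across each bisector between faces $f_j$ and $f_k$. For a constant field, all crossings of a given GVD face go in the same direction, so each pair of adjacent regions contributes at most one directed edge.
3. Rule out a cycle $R_1\to R_2\to\cdots\to R_1$. Such a cycle would require $u$ to cross a closed chain of bisectors all in one rotational sense. That is incompatible with a single constant direction $u$, because a linear functional cannot increase around a closed loop, and $\phi$ strictly increases along the flow.
4. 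Show that every path ends at the exit-face node. Each flow line reaches $f_{\text{exit},i}$ in finite time by condition 1, so it eventually enters the region of influence of $f_{\text{exit},i}$. The barycentric monotonicity also shows that the distance to the exit face decreases along the flow while the distance to each non-exit face $f_j$ with $\mu_j>0$ grows. So once the flow is in the exit region it stays there, and that node is a sink.

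The delicate case is the boundary of the cone, where $\mu_j=0$ for some $j$. There $u$ is parallel to the face $f_j$, which may create a degenerate GVD crossing. I would handle this either by the invariance argument of condition 2 or by restricting $u$ to the interior of $K$.
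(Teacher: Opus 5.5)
Your conditions 1 and 2 match the paper's proof. Your barycentric computation in fact makes rigorous the paper's claim that a positive combination of boundary vectors ``points towards'' the exit face. Your sink argument in step 4 is also sound.

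The gap is in step 3, and it is the same leap the paper's own proof makes (``the path cannot curve back on itself or form a cycle''). The nodes of the transition graph are regions of influence. An edge $R_j\to R_k$ only records the direction in which $u$ crosses the GVD face between $f_j$ and $f_k$. The edges of a putative cycle $R_1\to R_2\to\cdots\to R_1$ can be witnessed by different trajectories at unrelated places. So monotonicity of $\phi=-\beta_0$ along each individual trajectory does not exclude such a cycle: no closed curve is ever traversed on which $\phi$ would have to increase, and ``a linear functional cannot increase around a closed loop'' has nothing to act on. Retreating to the interior of $\mathcal{K}_i$ is also not an option, because Algorithm~1 may set $V_{c,i}=b_{i,j^*}$, which is a boundary ray of the cone.

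What is missing is a potential on the \emph{nodes} rather than along trajectories. Let $n_m$ be the inward unit normal of face $f_m$. With the perpendicular distance used in the paper, $d(x,f_m)$ is affine on $\Delta_i$ with gradient $n_m$. Hence the GVD face between $R_j$ and $R_k$ lies in a hyperplane with normal proportional to $n_j-n_k$. A crossing from $R_j$ into $R_k$ requires $d(x,f_j)-d(x,f_k)$ to increase along $u$, i.e.\ $r_j>r_k$, where $r_m:=u\cdot n_m$. So $r$ strictly decreases along every edge, and the graph is acyclic for \emph{any} constant field.

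Now take $u=\sum_j\mu_j(w_j-v_{ov})$ in the closed cone. The non-exit face $f_j$ opposite $w_j$ contains $v_{ov}$ and every $w_k$ with $k\neq j$. Therefore $r_j=\mu_j\,n_j\cdot(w_j-v_{ov})\ge 0$, while $r_{\text{exit}}=-u\cdot n_x<0$. The exit node is thus the strict minimum of $r$ and has no outgoing edge. Combined with your step 4 (every trajectory reaches $f_{\text{exit},i}$, so no non-exit region is a sink), every path terminates at the exit node.

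Ties $r_j=r_k$, for instance when $\mu_j=\mu_k=0$, only mean that $u$ is tangent to that bisector. Condition 2 permits this, so the boundary of the cone needs no special treatment.
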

\begin{proof}
 Let $\mathcal{K}_i$ be the conical region. Any constant vector $V_{c,i}$ within $\mathcal{K}_i$ can be expressed as a positive linear combination of the boundary vectors. Every boundary vector $b_{i,j}$  points from the opposite vertex $v_{ov,i}$ towards a point on the exit face $f_{\text{exit},i}$. Therefore, any positive linear combination of these vectors, and thus $V_{c,i}$, will point towards $f_{\text{exit},i}$. Let $h$ be a face of the GVD with a constant normal vector $n$. The dot product is $V_{c,i} \cdot n$. Since both $V_{c,i}$ and $n$ are constant vectors, their dot product is a constant value everywhere. Therefore, if $V_{c,i} \cdot n = 0$ for some point on $h$, it is zero for all points on $h$. Since $V_{c,i}$ points towards the exit face, any trajectory starting in $\Delta_i$ will proceed monotonically toward $f_{\text{exit},i}$ and is guaranteed to intersect it. The path cannot curve back on itself or form a cycle. This ensures that the induced graph of the transitions is acyclic.
\end{proof}
Hence, selecting a constant vector field within this conical region guarantees its validity under the framework of~\cite{lindemann2009simple}.

\section{Heuristic Vector Field Alignment Method}

Arbitrary vector field assignments, even those satisfying Theorem~\ref{thm:thm5}, can lead to unnecessary bending in integral curves. To mitigate this, we introduce a heuristic alignment strategy that adjusts each cell vector field to better align with the vector field of its successor(s), producing more direct trajectories.
\subsection{ Heuristic Alignment of Cell Vector Fields}
First, we establish a global plan. Let $x_g$ be the goal point, and let $\Delta_g$ be the goal simplex. For each simplex $\Delta_i$, let $c_i$ be its centroid. We compute the shortest-path tree rooted at $\Delta_g$ on the connectivity graph, which provides a successor $s(i)$ and a hop-distance $l(i)$ (i.e., the minimum number of simplex transitions required to reach the goal simplex $\Delta_g$) for each simplex $\Delta_i$. For each non-goal cell $\Delta_i$, we identify its exit face $f_{\text{exit},i}$ and the corresponding opposite vertex $v_{ov,i}$, as illustrated in Figure~\ref{fig:heuristic_alignment}.
 \begin{figure}[h]
     \centering
     \includegraphics[width=0.7\columnwidth,trim={4cm 1cm 4cm 1cm},clip]{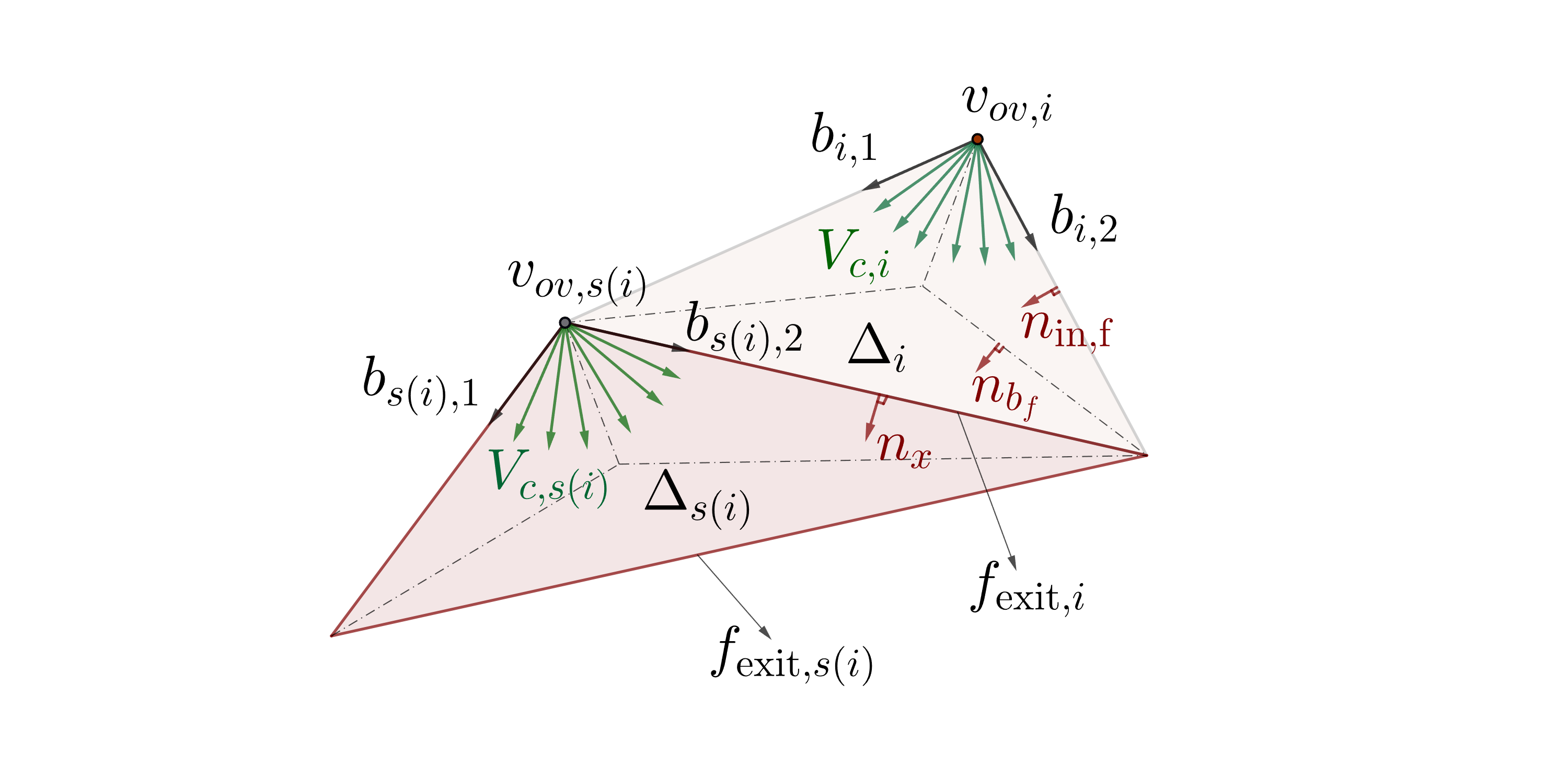}
     
     \caption{Geometric construction of valid vector fields for 2-simplex (triangle). The conical region, constructed by boundary vectors $b_{i,j}$, illustrates the set of valid constant cell vector fields $V_{c,i}$ that satisfy the conditions of Definition~\ref{def: def2}. The diagram visualizes the geometric constraints for face vector fields (Definition~\ref{def: def3}), showing the face inward normal $n_{\text{in},f}$, the exit face outward normal $n_{x}$, and the hyperplane normal $n_{b_f}$. The dashed lines are the GVD surface.}
     \label{fig:heuristic_alignment}
 \end{figure}

The core of our heuristic is to process cells in order of increasing hop-distance from the goal, propagating a desired direction backward. For each non-goal cell $\Delta_i$, we first define its $d$ unit boundary vectors $\{b_{i,j}\}$ which form its conical region $\mathcal{K}_i$:
\begin{equation}
 b_{i,j} = \text{normalize}(p_{v_j} - p_{v_{ov,i}}), \quad j=1,...,d ,
\end{equation}
in which $p_{v_j}$ are the coordinates of the vertices of the exit face $f_{\text{exit},i}$. We then assign the cell vector field $V_{c,i}$ as Algorithm~\ref{alg: alg_1}.

\begin{algorithm}
\caption{Heuristic Alignment of Cell Vector Fields}
\label{alg: alg_1}
\small
\begin{algorithmic}[1]
\Require Set of simplexes $\mathcal{T}$, goal point $x_g$.
\Ensure Cell vector field $V_{c,i}$ for all $\Delta_i \in \mathcal{T}$.
\State Compute centroids $c_i$, graph $G$, successors $s(i)$, and hop-distances $l(i)$ for all $\Delta_i$.
\For{each non-goal simplex $\Delta_i$ in ascending order of $l(i)$}
    \State Let $\{b_{i,j}\}$ be the set of boundary vectors forming the conical region $\mathcal{K}_i$.
    \If{$l(i) = 1$}
        \State $v_{\text{cand},i} \gets x_g - c_i$
    \Else
        \State $v_{\text{cand},i} \gets V_{c,s(i)}$
    \EndIf
    \State Normalize $v_{\text{cand},i}$.
    \If{$v_{\text{cand},i}$ is not in the conical region $\mathcal{K}_i$}
        \State Find $b_{i,j^*}$ that minimizes the angle with $v_{\text{cand},i}$.
        \State $V_{c,i} \gets b_{i,j^*}$
    \Else
        \State $V_{c,i} \gets v_{\text{cand},i}$
    \EndIf
\EndFor
\State \Return $\{V_{c,i}\}$
\end{algorithmic}
\end{algorithm}

To formally check if a candidate vector $v_{\text{cand},i}$ lies within the conical region formed by the basis vectors $\{b_{i,j}\}$, we must determine if it can be expressed as a non-negative linear combination of them. This can be formulated as a standard linear programming (LP) feasibility problem, where we seek to find coefficients $\alpha_j \ge 0$ such that:
\begin{equation}
    v_{\text{cand},i}=\sum_{j=1}^{d}\alpha_{j}b_{i,j}.
\end{equation}
If a solution for $\{\alpha_j\}$ exists, the vector is inside the conical region. If this feasibility problem has no solution, the candidate vector lies outside the cone. In this case, we project it to the closest boundary vector (the one minimizing the angle), a computationally simple choice that guarantees the vector remains within the valid conical region. While this requires solving a small LP for each simplex, the problem is computationally inexpensive as the number of variables ($d$) equals the dimension of the space. This check is efficient enough for precomputation and does not impede the real-time evaluation of the feedback law.

 Although it does not guarantee a globally optimal path, in practice, it yields significantly straighter and more uniform trajectories compared to unaligned cell fields~\cite{lindemann2009simple}. The effectiveness of the alignment heuristic extends beyond local smoothing. The heuristic encourages consecutive cell vector fields, $V_{c,i}$ and $V_{c,s(i)}$, to adopt a similar direction. By then defining the exit face vector field as the average of these two vectors, our method naturally creates conditions where adjacent cells with a coherent flow functionally merge. This process forms larger regions of consistent flow, guiding the robot along globally straighter paths.   
 
\subsection{Assignment of Face Vector Fields and Proof of Validity}

\begin{itemize}
    \item For each non-exit face $f \in F_i \setminus \{f_{\text{exit},i}\}$, we modify the standard inward-pointing normal to better align with the cell's flow. The face vector $V_f$ is defined as the normalized sum of the face's inward-pointing unit normal, $n_{\text{in},f}$, and the cell's own vector field, $V_{c,i}$:
    \begin{equation}
    V_f = \text{normalize}(n_{\text{in},f} + V_{c,i}).
    \end{equation}

    \item  For the unique exit face $f_{\text{exit},i}$, we average the cell vectors of the current cell $\Delta_{i}$ and its successor $\Delta_{j}$ (where $j=s(i)$) to promote a smooth passage between cells:
    \begin{equation}
    V_{f}(f_{\text{exit},i}) = \text{normalize}(V_{c,i} + V_{c,j}).
    \end{equation}

\end{itemize}
We must ensure these assignments satisfy Definition~\ref{def: def3}, guaranteeing the robot exits strictly through the exit face.

 \begin{proposition}
 \label{pro:pro1}
 The modified non-exit face vector field $V_f = \mathrm{normalize}(n_{\text{in},f} + V_{c,i})$ satisfies the conditions of Definition~\ref{def: def3} for a non-exit face.
\end{proposition}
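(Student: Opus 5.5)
The plan is to verify the two bullet conditions of Definition~\ref{def: def3} directly. Since $V_{c,i}$ is constant (Theorem~\ref{thm:thm5}) and $n_{\text{in},f}$ is constant on the flat face $f$, the vector $V_f$ is constant on the cell. It therefore suffices to show two strict inequalities for the unnormalized vector $w = n_{\text{in},f} + V_{c,i}$, and to check that $w \neq 0$. Normalization by a positive scalar does not change the signs of dot products.

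\textbf{Step 1: sign of $V_{c,i}$ against the relevant normals.} Fix a non-exit face $f$ of the $d$-simplex $\Delta_i$. Its vertices are $v_{ov,i}$ together with all but one vertex of $f_{\text{exit},i}$; call the missing exit vertex $v_k$. By Theorem~\ref{thm:thm5}, $V_{c,i}$ lies in the cone $\mathcal{K}_i$, so $V_{c,i} = \sum_j \alpha_j b_{i,j}$ with $\alpha_j \ge 0$, not all zero.

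For $j \neq k$, the vector $b_{i,j}$ joins two vertices of $f$, so $b_{i,j}\cdot n_{\text{in},f} = 0$. The vector $b_{i,k}$ points from $f$ to the opposite vertex $v_k$, so $b_{i,k}\cdot n_{\text{in},f} > 0$ by non-degeneracy. Hence $V_{c,i}\cdot n_{\text{in},f} \ge 0$.

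Similarly, every $b_{i,j}$ goes from $v_{ov,i}$, which lies strictly on the inner side of $f_{\text{exit},i}$, to a point of $f_{\text{exit},i}$. Hence $b_{i,j}\cdot n_x > 0$ for all $j$, and so $V_{c,i}\cdot n_x > 0$.

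\textbf{Step 2: first condition.} We have
\[
w\cdot n_{\text{in},f} = 1 + V_{c,i}\cdot n_{\text{in},f} \ge 1 > 0.
\]
In particular $w \ne 0$, so the normalization is well defined and $V_f\cdot n_{\text{in},f} > 0$ on all of $f$.

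\textbf{Step 3: second condition.} This is the step I expect to need the most care: identifying $n_{b_f}$ explicitly with the correct orientation. The hyperplane $b_f$ is the bisector of the dihedral angle between the hyperplanes of $f$ and $f_{\text{exit},i}$ inside the convex cell. It passes through their common $(d-2)$-face, and it is the zero set of $\operatorname{dist}(x,f)-\operatorname{dist}(x,f_{\text{exit},i})$. Writing both signed distances as affine functions with gradients $n_{\text{in},f}$ and $-n_x$, its normal is proportional to $n_{\text{in},f} + n_x$.

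For the orientation, note that $(n_{\text{in},f}+n_x)\cdot n_x = 1 + n_{\text{in},f}\cdot n_x$. This is strictly positive, because $n_{\text{in},f} \ne -n_x$: two distinct facets of a non-degenerate simplex are not parallel. So $n_{b_f}$ is a positive multiple of $n_{\text{in},f}+n_x$. Expanding,
\[
w\cdot(n_{\text{in},f}+n_x) = \bigl(1 + n_{\text{in},f}\cdot n_x\bigr) + V_{c,i}\cdot n_{\text{in},f} + V_{c,i}\cdot n_x .
\]
The first term is positive, the second is nonnegative, and the third is positive, all by Step 1. Hence $V_f\cdot n_{b_f} > 0$. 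Because $V_f$ is constant, this holds at every point, in particular on the closure of the region of influence of $f$. This completes the verification of Definition~\ref{def: def3}.
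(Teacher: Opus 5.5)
Your proof is correct and follows the paper's route. For the first condition you bound $w\cdot n_{\text{in},f}\ge 1$, and for the second you split $w\cdot(n_{\text{in},f}+n_x)$ into $1+n_{\text{in},f}\cdot n_x>0$, $V_{c,i}\cdot n_{\text{in},f}\ge 0$, and $V_{c,i}\cdot n_x>0$. Beyond the paper, which simply asserts these facts, you prove the two sign conditions by decomposing over the boundary vectors, and you derive the normal of the bisector hyperplane and check its orientation.
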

\begin{proof} First, we show $V_f\cdot n_{\text{in},f} > 0$. The dot product of the unnormalized sum, $(n_{\text{in},f} + V_{c,i}) \cdot n_{\text{in},f}$, expands to $1 + (V_{c,i} \cdot n_{\text{in},f})$ since $n_{\text{in},f}$ is a unit vector. As $V_{c,i}$ points toward the exit face restricted to be within its conical region, its dot product with the inward normal of any other face is non-negative ($V_{c,i} \cdot n_{\text{in},f} \ge 0$). The entire expression is therefore $\ge 1$, satisfying the first condition. Since the dot product is strictly positive, the vector sum $(n_{\text{in},f} + V_{c,i})$ cannot be the zero vector, and thus the normalization is always well-defined.

Second, we must show $V_f\cdot n_{b_f} > 0$, where $n_{b_f} = n_x + n_{\text{in},f}$ is the normal to the equidistant hyperplane. We prove that $(n_{\text{in},f} + V_{c,i})\cdot n_{b_f} >0$ by analyzing two components: 1) The cell vector component, $V_{c,i} \cdot n_{b_f}$, which expands to $(V_{c,i} \cdot n_x) + (V_{c,i} \cdot n_{\text{in},f})$. The first term, $V_{c,i} \cdot n_x$, is strictly positive because the conical region for $V_{c,i}$ is oriented toward the exit face. The second term, $V_{c,i} \cdot n_{\text{in},f}$, is non-negative because $V_{c,i}$ does not point into any non-exit faces. The sum of a strictly positive and a non-negative term is therefore strictly positive. 2) The geometric component is $n_{\text{in},f}\cdot n_{b_f} = (n_{\text{in},f}\cdot n_x) + (n_{\text{in},f} \cdot n_{\text{in},f})$. Since $n_{\text{in},f}$ is a unit vector, this simplifies to $(n_{\text{in},f} \cdot n_x) +1$. For any two distinct faces of a simplex, their respective inward and outward unit normals cannot be anti-parallel, meaning their dot product is strictly greater than $-1$. This guarantees the term is strictly positive. The sum of these two strictly positive components is therefore strictly positive, and the proposition holds.    
\end{proof}

\begin{proposition}
The averaged exit face vector field $V_f(f_{\text{exit},i}) = \mathrm{normalize}(V_{c,i} + V_{c,j})$ satisfies the conditions of Definition~\ref{def: def3} for an exit face.
\end{proposition}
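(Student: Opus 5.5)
Only the first condition of Definition~\ref{def: def3} concerns the exit face: we need $V_f(f_{\text{exit},i})\cdot n_x>0$ on $f_{\text{exit},i}$, where $n_x$ is the outward unit normal of $f_{\text{exit},i}$ with respect to $\Delta_i$. The second condition is stated for non-exit faces and has already been handled in Proposition~\ref{pro:pro1}. Since $V_{c,i}$ and $V_{c,j}$ are constant, $V_f$ is constant on the face. It therefore suffices to show that
\[(V_{c,i}+V_{c,j})\cdot n_x = V_{c,i}\cdot n_x + V_{c,j}\cdot n_x>0 .\]
This inequality also implies $V_{c,i}+V_{c,j}\neq 0$, so the normalization is well-defined, exactly as in the proof of Proposition~\ref{pro:pro1}.

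The first term is positive by the argument of Theorem~\ref{thm:thm5}. $V_{c,i}$ is a nonnegative, nonzero combination of boundary vectors $b_{i,j}=\mathrm{normalize}(p_{v_j}-p_{v_{ov,i}})$. Each such vector goes from the opposite vertex to a vertex of the exit face, so each has a strictly positive component along $n_x$. Concretely, $(p_{v_j}-p_{v_{ov,i}})\cdot n_x$ equals the height of $v_{ov,i}$ above the hyperplane of $f_{\text{exit},i}$, which is positive because the simplex is non-degenerate. Hence $V_{c,i}\cdot n_x>0$.

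The second term needs the successor. The face $f_{\text{exit},i}$ is shared by $\Delta_i$ and $\Delta_j$, and $n_x$ is the \emph{inward} normal of that face for $\Delta_j$. We split into cases.
\begin{itemize}
\item If $\Delta_j$ is a non-goal cell, $V_{c,j}$ lies in the cone $\mathcal{K}_j$ pointing toward $f_{\text{exit},j}\neq f_{\text{exit},i}$. This shared face is a non-exit face of $\Delta_j$. The fact already used in Proposition~\ref{pro:pro1}, that a cone vector has nonnegative dot product with the inward normal of every non-exit face, gives $V_{c,j}\cdot n_x\ge 0$.
\item If $\Delta_j$ is the goal cell, $V_{c,j}$ is either undefined or, inside $R_g$, equal to $\mathrm{normalize}(x_g-x)$. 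We would take $V_{c,j}$ to be the value used in Algorithm~\ref{alg: alg_1}, namely $x_g-c_i$ normalized, which is the vector used when $l(i)=1$. In that case $V_{c,i}$ is either this vector or its nearest boundary vector. Alternatively, we could simply define the exit face vector of such cells to be $V_{c,i}$, for which the claim is immediate.
\end{itemize}

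The main obstacle is justifying $V_{c,j}\cdot n_{\text{in}}\ge 0$ for the non-exit faces of $\Delta_j$. Consider the face $f$ of $\Delta_j$ opposite an exit-face vertex $v_k$. Its inward normal is proportional to the barycentric gradient $\nabla\lambda_k$. Then $b_{j,m}\cdot\nabla\lambda_k\propto \lambda_k(v_m)-\lambda_k(v_{ov,j})=\delta_{mk}\ge 0$, and positive combinations preserve this sign. I would write out this computation briefly, because it makes both the first-term positivity and the nonnegativity for non-exit faces rigorous. Combining the two terms, the sum is $>0$, and the proposition follows.
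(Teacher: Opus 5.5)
Your proposal is correct and follows the paper's route: you split $(V_{c,i}+V_{c,j})\cdot n_x$ into a strictly positive term from $V_{c,i}\in\mathcal{K}_i$ and a nonnegative term from $V_{c,j}$ pointing away from its entrance face, and your height and barycentric-gradient computations make rigorous what the paper only asserts. The paper's proof skips the case $\Delta_j=\Delta_g$, where there is no cone $\mathcal{K}_j$; rather than substituting $\mathrm{normalize}(x_g-c_i)$ or redefining the exit vector, you can settle it with the paper's own goal-cell field $\mathrm{normalize}(x_g-x)$, since $x_g\in\Delta_g$ lies on the side of the shared face into which $n_x$ points, so $(x_g-x)\cdot n_x\ge 0$ for $x$ on that face.
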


\begin{proof} We show that $V_f(f_{\text{exit},i}) \cdot n_x > 0$. As established in the proof of Proposition~\ref{pro:pro1}, since $V_{c,i}$ is chosen from its conical region $\mathcal{K}_i$, it is guaranteed that $V_{c,i} \cdot n_x > 0$. Now consider the successor's cell vector, $V_{c,j}$. This vector is chosen to be within the conical region $\mathcal{K}_j$ of cell $\Delta_j$, which points towards its own exit face, $f_{\text{exit},j}$. This only guarantees that $V_{c,j}$ points away from its entrance face ($f_{\text{exit},i}$). Therefore, its component along $n_x$ is non-negative: $V_{c,j} \cdot n_x \ge 0$. %The equality holds in the worst-case scenario where $V_{c,j}$ is orthogonal to $n_x$.
The dot product of the sum is $(V_{c,i} + V_{c,j}) \cdot n_x = (V_{c,i} \cdot n_x) + (V_{c,j} \cdot n_x)$. The sum of a strictly positive and a non-negative term is strictly positive, satisfying the condition.
    \end{proof}

\vspace{-5mm}
\subsection{ Blending the Vector Fields}

Finally, within each cell $\Delta_i$, a single smooth vector field $V(x)$ is synthesized by interpolating between the cell vector $V_{c,i}$ and the vector of the closest face.

For any point $x \in \Delta_i$, we first identify its closest face, $f^* \in F_i$, by computing the perpendicular distance $d(x, f)$ to every face $f$ of the cell. The region of influence of $f^*$ (its Voronoi region) consists of all points for which $f^*$ is the closest face.

Within this region, we use the smooth bump function $b(\sigma)$ from Definition~\ref{def: def1}. The parameter $\sigma \in [0, 1]$ must be 0 on the face $f^*$ and 1 on the GVD surface (the boundary of the region of influence). The formula to achieve this is:
\begin{equation}
 \sigma(x) = 1 - \prod_{f \in F_i \setminus \{f^*\}}\left(\frac{d(x, f) - d(x, f^*)}{d(x, f)}\right).
\end{equation}

Let $V_{f^*}$ be the face vector for the closest face $f^*$. The final vector field at point $x$ is the normalized, weighted average: 
\begin{equation}
 V(x) = \text{normalize}\left( (1 - b(\sigma(x))) V_{f^*} + b(\sigma(x)) V_{c,i} \right).
\end{equation}

This construction guarantees that the vector field is identical to the face vector on the cell boundary, ensuring continuity across cells, and smoothly transitions to the cell vector field toward the interior of the cell.

\section{Constructing the Maximal Star-shaped Chain of Simplexes}

To create more direct trajectories, we identify a large funnel where the robot can safely bypass conservative cell-by-cell transitions. While standard algorithms can compute the exact visibility polytope, they are often computationally intensive as they must construct a precise geometric boundary. Our simplex-based expansion is therefore chosen for its efficiency and simplicity; it relies on a series of simple local visibility tests to incrementally grow the funnel.  We acknowledge that this construction is conservative and the resulting region $\mathcal{R}_g$ is strictly contained within the true visibility polytope of the goal. The expansion is sensitive to the simplicial complex; for example, Steiner points introduced for mesh quality can prevent the chain expansion even where a simplex is completely visible to the goal. However, this approach is sufficient for our purpose and naturally extends to higher dimensions.

\subsection{ Geometric Criteria for Star-shaped}

Our method incrementally grows a star-shaped region, $\mathcal{R}_g$, starting from the goal simplex, $\Delta_g$. Consider a region $\mathcal{R} \subset \mathbb{R}^d$ known to be star-shaped with respect to $x_g$. We wish to add an adjacent $d$-simplex, $\Delta_T$, which shares a $(d-1)$-dimensional face $\mathcal{S}$ with $\mathcal{R}$'s boundary. The union $\mathcal{R} \cup \Delta_T$ remains star-shaped if the new vertex of $\Delta_T$ not on $\mathcal{S}$, denoted $v_{\text{new}}$, is visible from $x_g$. See Figure~\ref{fig2} for an illustration. We propose two equivalent geometric criteria for this test.

\begin{criterion}[Visibility Cone from Goal]
\label{cri: cri2}
 A visibility cone is formed at the goal $x_g$ by the vectors pointing from $x_g$ to the vertices of the shared face $\mathcal{S}$. The region $\mathcal{R} \cup \Delta_T$ is star-shaped if the vector $v_{\text{new}} - x_g$ lies strictly inside this cone.
\end{criterion}

\begin{theorem}
Criterion~\ref{cri: cri2} guarantees preservation of the star-shaped property.
\end{theorem}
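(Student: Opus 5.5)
The plan is to reduce the claim to a single segment-splitting argument. Fix an arbitrary point $y \in \Delta_T$ and show that the segment $[x_g, y]$ is contained in $\mathcal{R} \cup \Delta_T$. Points of $\mathcal{R}$ are already handled by hypothesis. Write $s_1,\dots,s_d$ for the vertices of the shared face $\mathcal{S}$ and $H=\operatorname{aff}(\mathcal{S})$ for its supporting hyperplane. The idea is to show that $[x_g,y]$ meets $H$ at a point $q\in\mathcal{S}$. Then $[x_g,q]\subset\mathcal{R}$ by star-shapedness of $\mathcal{R}$ (since $q\in\mathcal{S}\subset\mathcal{R}$), and $[q,y]\subset\Delta_T$ by convexity of the simplex.

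\textbf{Step 1 (position of $x_g$).} First I would show that $x_g$ lies strictly on the opposite side of $H$ from $v_{\text{new}}$. Since $\mathcal{T}$ is a simplicial complex, the simplex of $\mathcal{R}$ containing $\mathcal{S}$ and the simplex $\Delta_T$ have disjoint interiors. Hence, near any relative-interior point $p$ of $\mathcal{S}$, the set $\mathcal{R}$ lies in the closed half-space $H^-$ not containing $v_{\text{new}}$. Take $p\in\operatorname{relint}\mathcal{S}$, with $x_g\notin H$ (degenerate placements of the goal are excluded by the strict-cone hypothesis, which I will make precise). Star-shapedness gives $[x_g,p]\subset\mathcal{R}$, so points of this segment just before $p$ lie in $H^-$. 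Since a line crosses a hyperplane at most once, $x_g$ lies strictly in $H^-$.

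\textbf{Step 2 (cone containment).} Let $K=x_g+\operatorname{cone}\{s_j-x_g\}$ be the visibility cone of Criterion~\ref{cri: cri2}. By hypothesis $v_{\text{new}}\in K$, and trivially each $s_j\in K$. Since $K$ is convex, $\Delta_T=\operatorname{conv}\{v_{\text{new}},s_1,\dots,s_d\}\subset K$. Because $x_g\notin H$, the section $K\cap H$ is exactly $\operatorname{conv}\{s_j\}=\mathcal{S}$: parametrize a point of $K$ as $x_g+\sum\lambda_j(s_j-x_g)$, and observe that it lies on $H$ precisely when $\sum\lambda_j=1$. Combined with Step 1, every point of $K\setminus\{x_g\}$ in the closed half-space $H^+$ lies on a ray from $x_g$ that crosses $H$ inside $\mathcal{S}$.

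\textbf{Step 3 (conclusion).} For $y\in\Delta_T$, we have $y\in K\cap H^+$. Since $x_g\in H^-$ strictly, the segment $[x_g,y]$ meets $H$ at a unique point $q$, and by Step 2, $q\in\mathcal{S}$. Splitting the segment as above finishes the proof. Strictness of the cone condition is what guarantees $v_{\text{new}}\notin H$ (together with non-degeneracy of $\Delta_T$) and rules out boundary cases.

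I expect Step 1 to be the main obstacle, since it is the only place where the combinatorial structure of the complex (interior-disjointness of adjacent simplexes across $\mathcal{S}$) enters. I would handle it with the local half-space argument at a relative-interior point of $\mathcal{S}$ described above.
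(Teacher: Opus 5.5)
Your proof is correct and uses the same idea as the paper's proof. The cone at $x_g$ through $\mathcal{S}$ contains $\Delta_T$; star-shapedness of $\mathcal{R}$ covers each segment up to $\mathcal{S}$; convexity of $\Delta_T$ covers the rest. You make rigorous what the paper only asserts: your section identity $K\cap H=\mathcal{S}$ is exactly what justifies the paper's informal claims that the segment ``passes unobstructed'' and that convexity of $\Delta_T$ extends this to every point of $\Delta_T$.

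Incidentally, your Step 1 can be bypassed. Write $y=x_g+\sum_j\lambda_j(s_j-x_g)$ with $\lambda_j\ge 0$. If $\sum_j\lambda_j\le 1$, then $y\in\operatorname{conv}(\{x_g\}\cup\mathcal{S})\subset\mathcal{R}$ by star-shapedness. Otherwise $[x_g,y]$ passes through $\sum_j(\lambda_j/\sum_k\lambda_k)s_j\in\mathcal{S}$, and your Step 3 split applies. So neither the interior-disjointness of adjacent simplexes nor the strictness of the cone condition is actually needed.
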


\begin{proof}
 Assume vector $v_{\text{new}} - x_g$ lies strictly within the visibility cone at $x_g$ formed by the vertices of face $\mathcal{S}$. The segment $[x_g, v_{\text{new}}]$ is, by definition, contained within this cone. Because region $\mathcal{R}$ is star-shaped, no other boundaries of $\mathcal{R}$ can obstruct this cone. The segment $[x_g, v_{\text{new}}]$ therefore passes unobstructed into $\Delta_T$. By convexity of $\Delta_T$, all segments from $x_g$ to any interior point of $\Delta_T$ remain fully within the extended region. Thus, $\mathcal{R} \cup \Delta_T$ is star-shaped. 
\end{proof}
Similar to the check in Section~V-A, this geometric condition can be formulated as a computationally inexpensive LP feasibility problem. We seek to determine if $v_{\text{new}}-x_g$ can be expressed as a non-negative linear combination of the vectors forming the visibility cone. 

\begin{figure}
 \centering
{\includegraphics[width=0.4\textwidth,trim={0 0.5cm 0 0.5cm},clip]{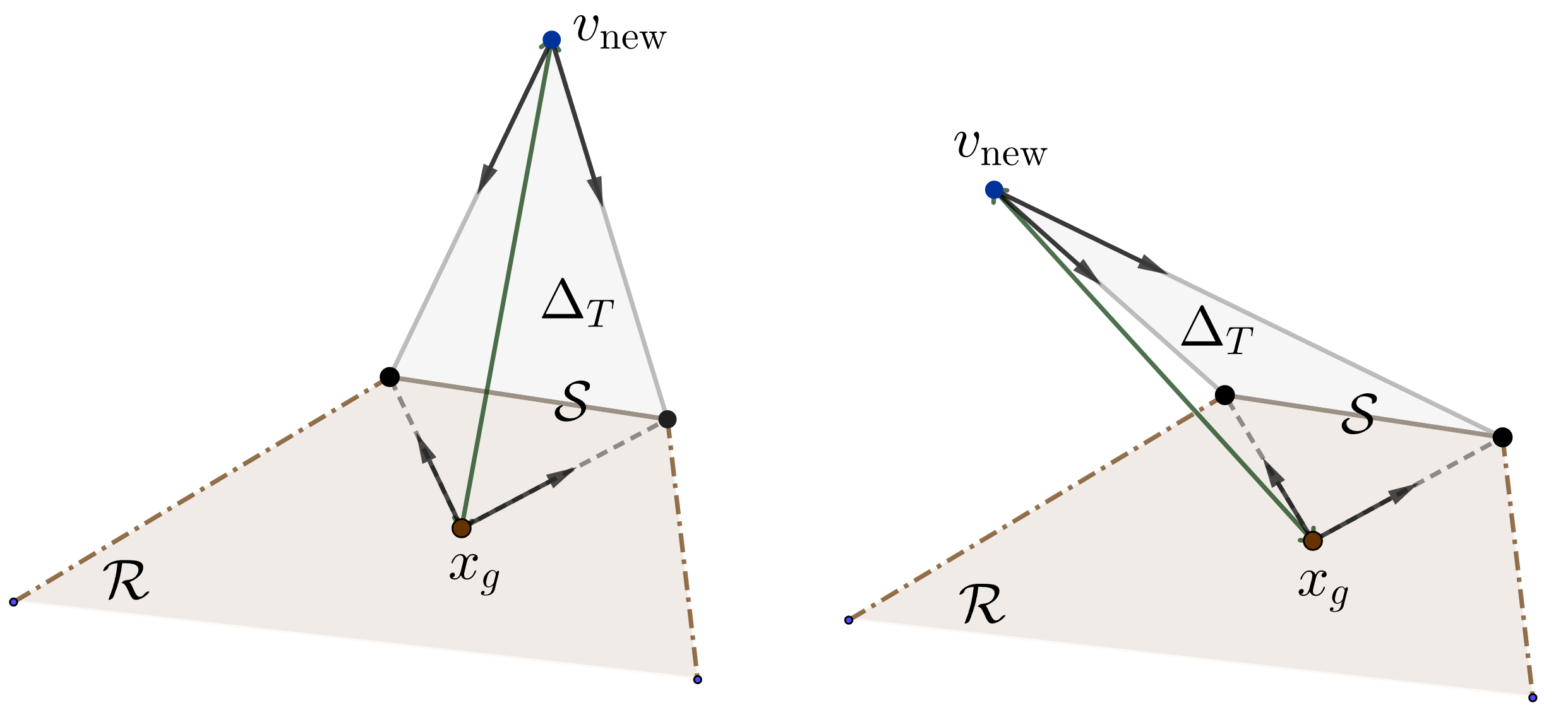}}
\caption{Geometric test for expanding the star-shaped region $\mathcal{R}$ with an adjacent simplex $\Delta_T$ (in 2D). (Left) A valid addition where the new vertex $v_\text{new}$ lies within the visibility cone from $x_g$ (Criterion~\ref{cri: cri2}). (Right) An invalid addition where visibility is obstructed.}
\label{fig2}
\end{figure}

\subsection{Algorithm for Growing the Star-Shaped Chain}

We find the maximal chain of simplexes, $\mathcal{C}_g$, that satisfy the visibility criterion by searching outward from the goal simplex $\Delta_g$. The general geometric search is detailed in Algorithm~\ref{alg: alg_2}. For our experiments, this search was constrained to only explore backward along the precomputed discrete plan to ensure a fair comparison with the baseline. Within the resulting region $\mathcal{R}_g = \bigcup_{\Delta_i \in \mathcal{C}_g} \Delta_i$, for any cell $\Delta_i \in \mathcal{C}_g$, its cell vector field $V_{c,i}$, as well as its internal face vectors (not on the boundary of $\mathcal{R}_g$), are set to point directly to the goal. To ensure a smooth transition at the boundary of $\mathcal{R}_g$, we keep the face vectors assigned to each face and smoothly blend them with the new direct-to-goal $V_{c,i}$. 

\begin{algorithm}
\caption{Construction of the Maximal Star-Shaped Chain}
\label{alg: alg_2}
\small
\begin{algorithmic}[1]
\Require Set of simplexes $\mathcal{T}$, goal point $x_g$.
\Ensure The set of simplexes $\mathcal{C}_g$ forming the star-shaped region.
\State Let $\Delta_g$ be the simplex containing $x_g$.
\State Initialize queue $Q$ with $\Delta_g$.
\State Initialize $\mathcal{C}_g = \{\Delta_g\}$ and a set of visited simplexes, $V_s = \{\Delta_g\}$.
\While{$Q$ is not empty}
    \State $\Delta_i \gets \text{dequeue}(Q)$.
    \For{each neighbor $\Delta_j$ of $\Delta_i$ not in $V_s$}
        \State Let $\mathcal{S}$ be the shared face between $\Delta_i$ and $\Delta_j$.
        \State Let $v_{\text{new}}$ be the vertex of $\Delta_j$ not on $\mathcal{S}$.
        \If{Vertex $v_{\text{new}}$ lies in the visibility cone from $x_g$ through $\mathcal{S}$}
            \State Add $\Delta_j$ to $\mathcal{C}_g$.
            \State Enqueue $\Delta_j$ into $Q$.
            \State Add $\Delta_j$ to $V_s$.
        \EndIf
    \EndFor
\EndWhile
\State \Return $\mathcal{C}_g$
\end{algorithmic}
\end{algorithm}

\section{Results and Analysis}
To evaluate the proposed method, we first compare its trajectory quality against the foundational baseline~\cite{lindemann2009simple}. We then conduct an ablation study to isolate and quantify the contribution of the star-shaped region. Finally, we compare our approach against four other motion planners and analyze its overall computational efficiency.
\subsection{Comparison with Baseline Feedback Law}
 The baseline method offers to simply assign each cell vector field as the unit vector pointing from the current point to the midpoint of the exit face, and defines face vector fields as unit vectors normal (perpendicular) to the faces, pointing inward for non-exit faces and outward for the exit face. The comparison was performed in three distinct environments, and to evaluate the robustness of the proposed method, we chose the centroid of every cell in the triangulation as a goal state, generating and analyzing a total of over 20000 integral curves for each environment.
We quantified path quality using a set of geometric and dynamics-based metrics.
For a path with curvature $\kappa(s)$, we measured the Total Bending, $E_B = \int_0^L \kappa(s)^2 ds$, and the Total Turning, $E_T = \int_0^L |\kappa(s)| ds$, which measures cumulative heading change. We also measured path length, maximum curvature, and dynamic feasibility by simulating a 2D double integrator with state vector $[p_x,\dot p_x,p_y,\dot p_y]^T$, using an LQR controller to find the travel time and total control effort with the cost matrices $Q = \text{diag}([100, 1, 100, 1])$ and $R = I_2$ to heavily penalize position error. 

The quantitative results are summarized in Table~\ref{tab:results}. The table includes a 'Win Rate' metric, which indicates the percentage of paired trajectories (starting from the same initial point) for which the proposed method achieved a better score on the given metric. For a fair and direct comparison, the underlying discrete plan was computed once and held constant for both methods.

\begin{table}[h!]
\centering
\caption{Quantitative Comparison of Methods. \\ $N_G$: Number of Goal Cells tested, $N_P$: Total Integral Curves evaluated.}
\label{tab:results}
\resizebox{\columnwidth}{!}{%
\begin{tabular}{@{}l l c c c c@{}}
\toprule
\textbf{Environment} & \textbf{Metric} & \textbf{Baseline} & \textbf{Proposed Method} & \textbf{Improv. (\%)} & \textbf{Win Rate (\%)} \\
\midrule
\multirow{6}{*}{\shortstack[l]{\textbf{Maze}\\ \small{$(N_G=109, N_P=52045)$}}}
& Path Length & 16.70 $\pm$ 10.85 & 12.14 $\pm$ 7.85 & 27.30 & 97.89 \\
& Max Curvature & 23.35 $\pm$ 8.66 & 18.38 $\pm$ 8.40 & 21.30 & 66.59 \\
& Total Bending & 602.65 $\pm$ 1664.75 & 94.44 $\pm$ 76.88 & 84.33 & 97.50 \\
& Total Turning & 47.28 $\pm$ 65.04 & 12.21 $\pm$ 9.03 & 74.18 & 99.73 \\
& LQR Travel Time & 16.70 $\pm$ 10.85 & 12.14 $\pm$ 7.85 & 27.30 & 97.29 \\
& LQR Control Effort & 19.59 $\pm$ 13.12 & 8.00 $\pm$ 4.96 & 59.16 & 98.55 \\
\midrule
\multirow{6}{*}{\shortstack[l]{\textbf{Bug Trap}\\ \small{$(N_G=145, N_P=20126)$}}}
& Path Length & 15.41 $\pm$ 10.12 & 13.70 $\pm$ 9.02 & 11.13 & 99.44 \\
& Max Curvature & 26.82 $\pm$ 6.51 & 14.19 $\pm$ 4.88 & 47.10 & 99.80 \\
& Total Bending & 1058.11 $\pm$ 744.96 & 43.61 $\pm$ 28.05 & 95.88 & 99.88 \\
& Total Turning & 82.26 $\pm$ 56.82 & 6.49 $\pm$ 3.99 & 92.11 & 99.89 \\
& LQR Travel Time & 15.41 $\pm$ 10.12 & 13.70 $\pm$ 9.02 & 11.13 & 99.11 \\
& LQR Control Effort & 9.48 $\pm$ 5.68 & 6.61 $\pm$ 3.96 & 30.24 & 97.93 \\
\midrule
\multirow{6}{*}{\shortstack[l]{\textbf{ Sparse}\\ \small{$(N_G=75, N_P=33200)$}}}
& Path Length & 7.07 $\pm$ 3.96 & 6.09 $\pm$ 3.33 & 13.84 & 92.43 \\
& Max Curvature & 14.85 $\pm$ 6.33 & 9.32 $\pm$ 6.85 & 37.21 & 79.17 \\
& Total Bending & 309.80 $\pm$ 2590.75 & 18.60 $\pm$ 19.70 & 94.00 & 97.52 \\
& Total Turning & 22.54 $\pm$ 99.41 & 3.80 $\pm$ 2.65 & 83.16 & 99.70 \\
& LQR Travel Time & 7.07 $\pm$ 3.96 & 6.09 $\pm$ 3.33 & 13.84 & 91.71 \\
& LQR Control Effort & 6.61 $\pm$ 3.18 & 3.50 $\pm$ 1.84 & 47.02 & 97.24 \\
\bottomrule
\end{tabular}
}
\end{table}

\begin{figure}[t!]
    \centering
    
   \begin{subfigure}[b]{0.48\columnwidth}
        \centering
        \includegraphics[width=\textwidth,trim={2.5cm 2.5cm 2.5cm 2.5cm},clip]{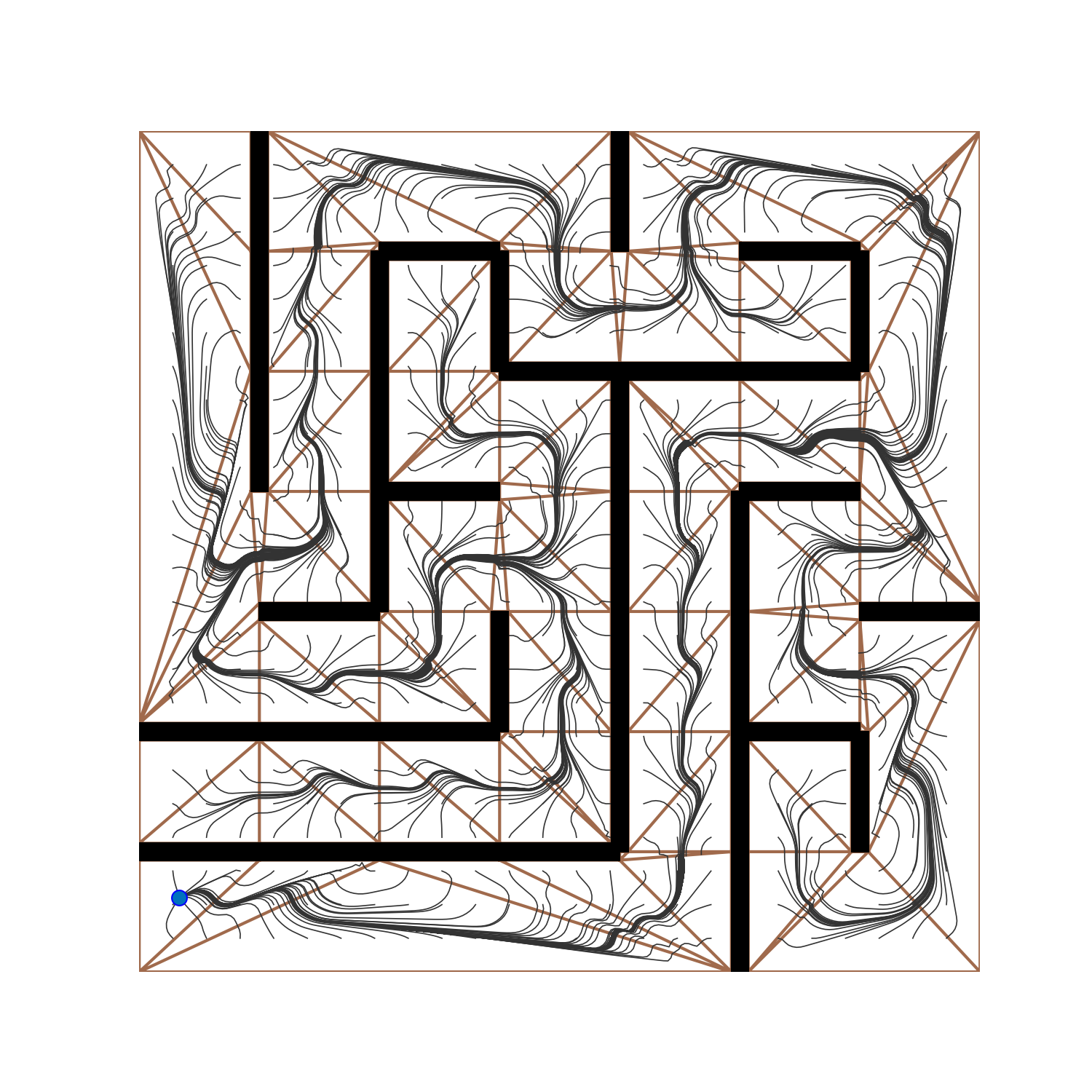}
        \caption{Baseline: Maze}
        \label{fig:maze_base}
    \end{subfigure}
    \hfill 
    \begin{subfigure}[b]{0.48\columnwidth}
        \centering
        \includegraphics[width=\textwidth,trim={2.5cm 2.5cm 2.5cm 2.5cm},clip]{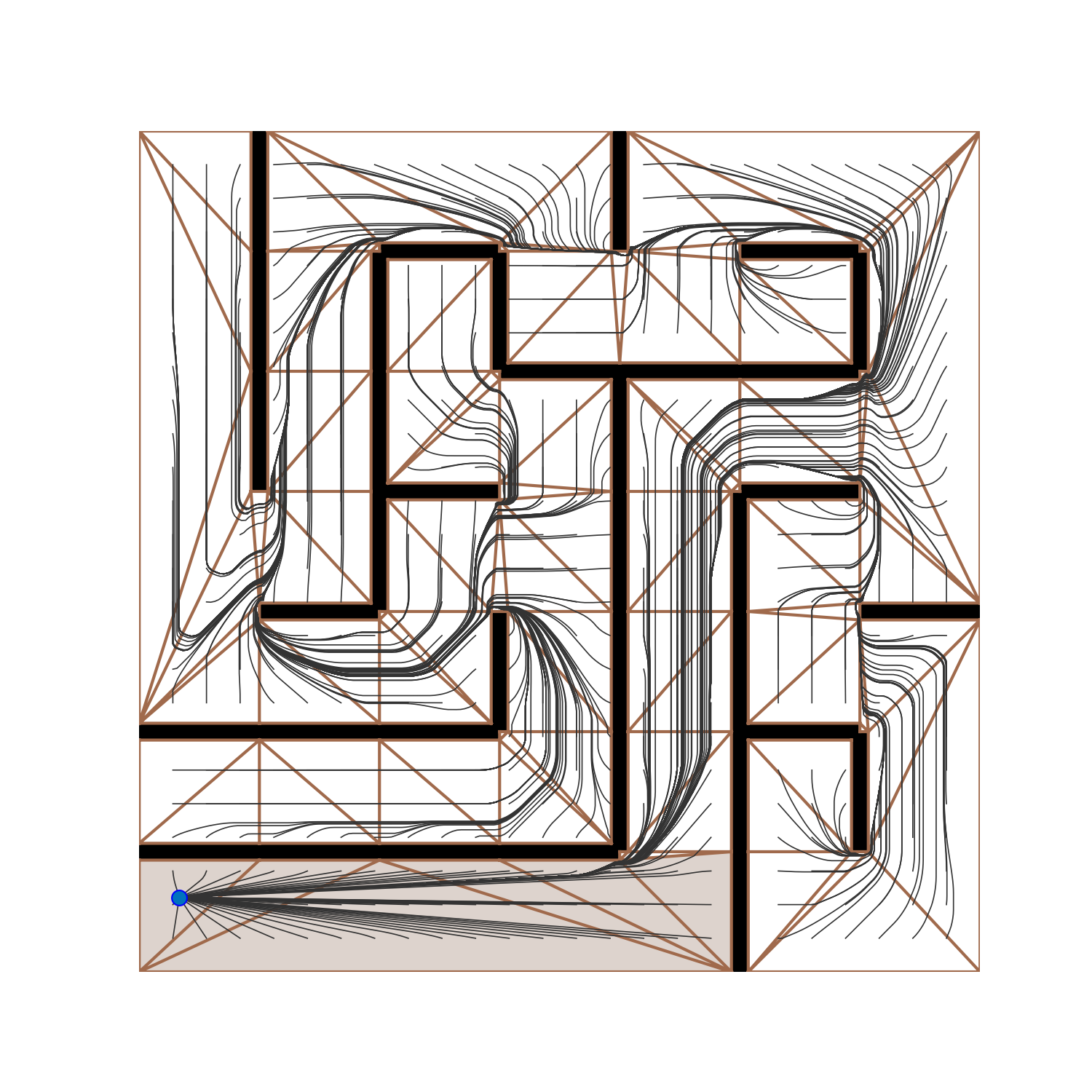}
        \caption{Proposed: Maze}
        \label{fig:maze_proposed}
    \end{subfigure}

    \begin{subfigure}[b]{0.48\columnwidth}
        \centering
        \includegraphics[width=\textwidth,trim={2.5cm 2.5cm 2.5cm 2.5cm},clip]{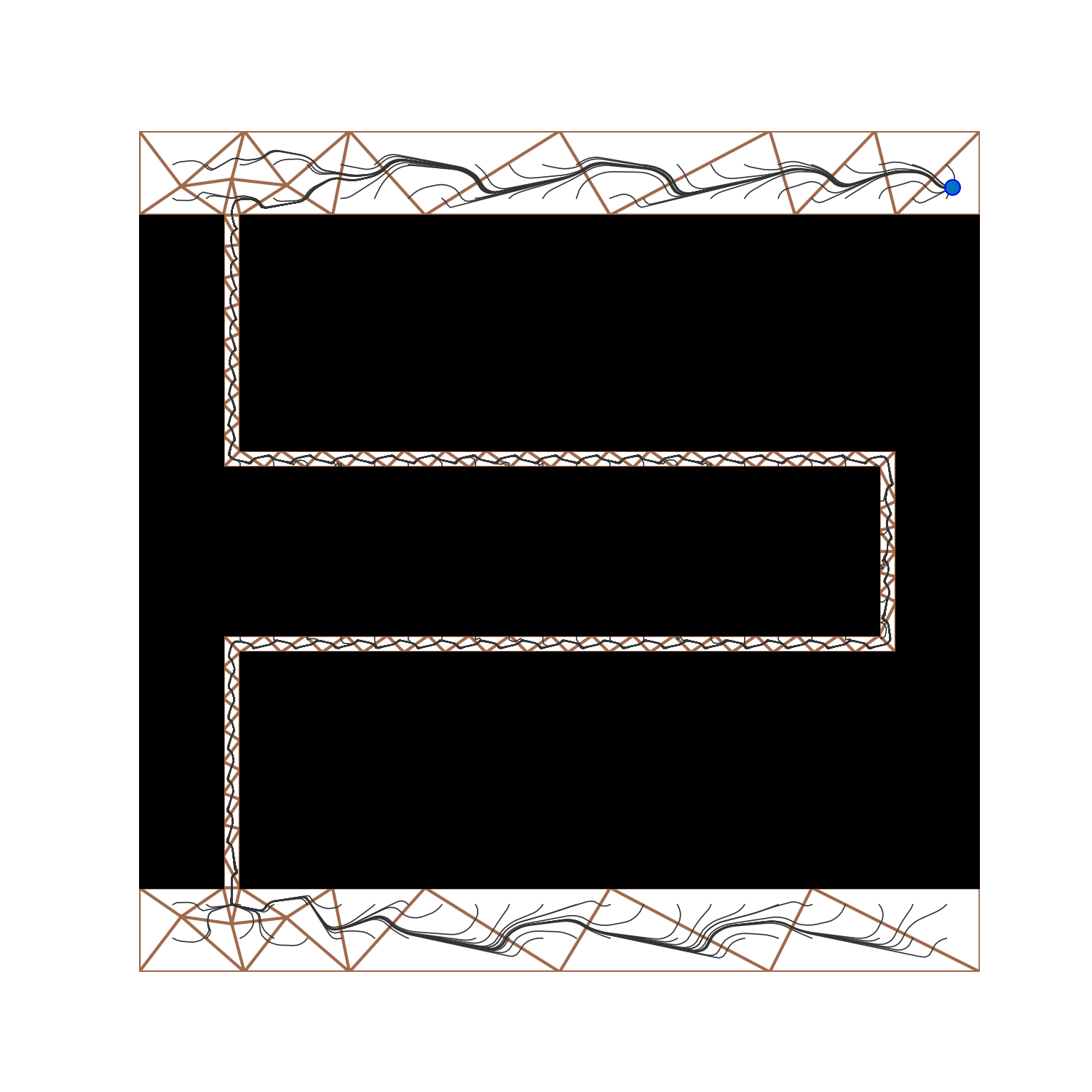}
        \caption{Baseline: Bug Trap}
        \label{fig:bug_base}
    \end{subfigure}
    \hfill 
    \begin{subfigure}[b]{0.48\columnwidth}
        \centering
        \includegraphics[width=\textwidth,trim={2.5cm 2.5cm 2.5cm 2.5cm},clip]{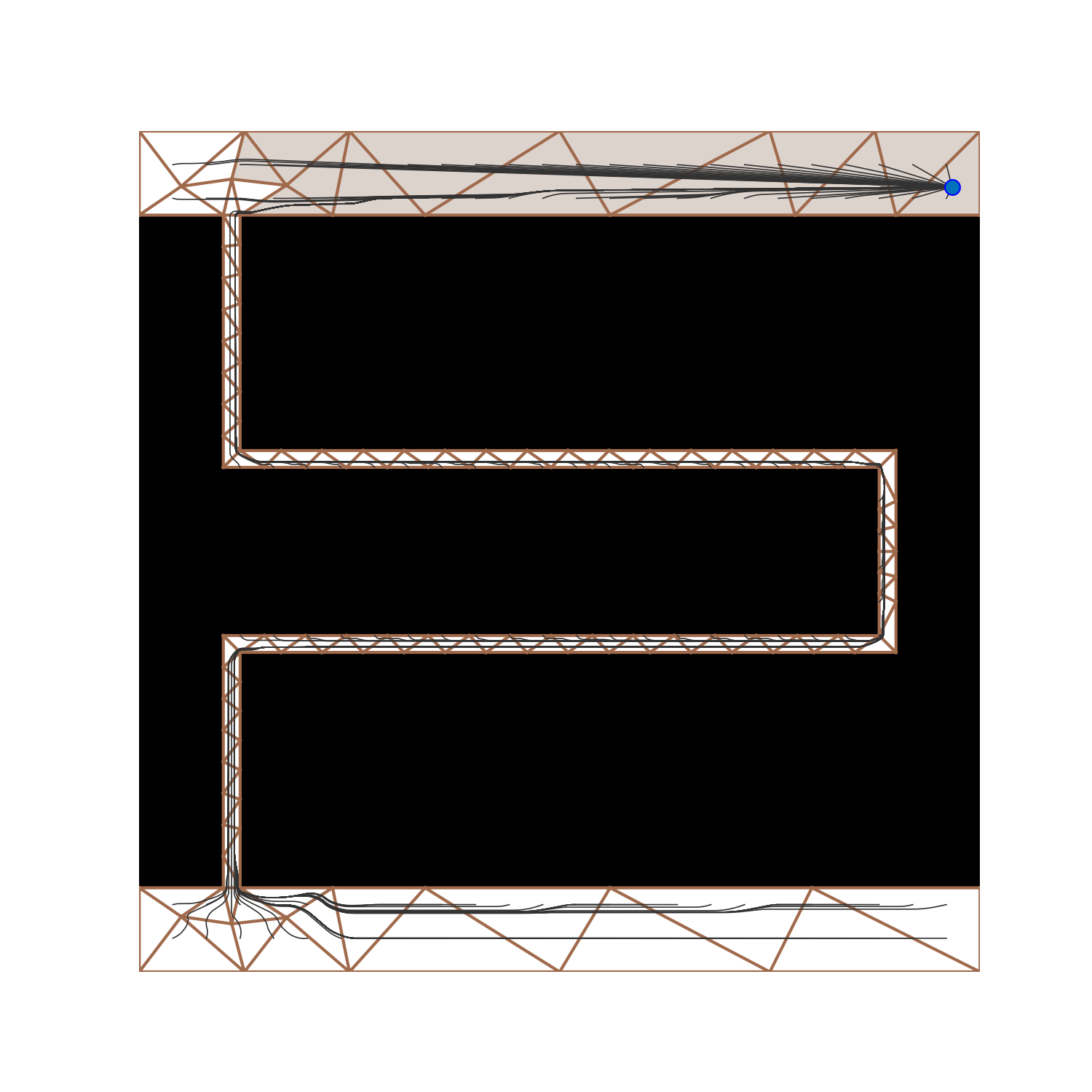}
        \caption{Proposed: Bug Trap}
        \label{fig:bug_proposed}
    \end{subfigure}

    \begin{subfigure}[b]{0.48\columnwidth}
        \centering
        \includegraphics[width=\textwidth,trim={2.5cm 2.5cm 2.5cm 2.5cm},clip]{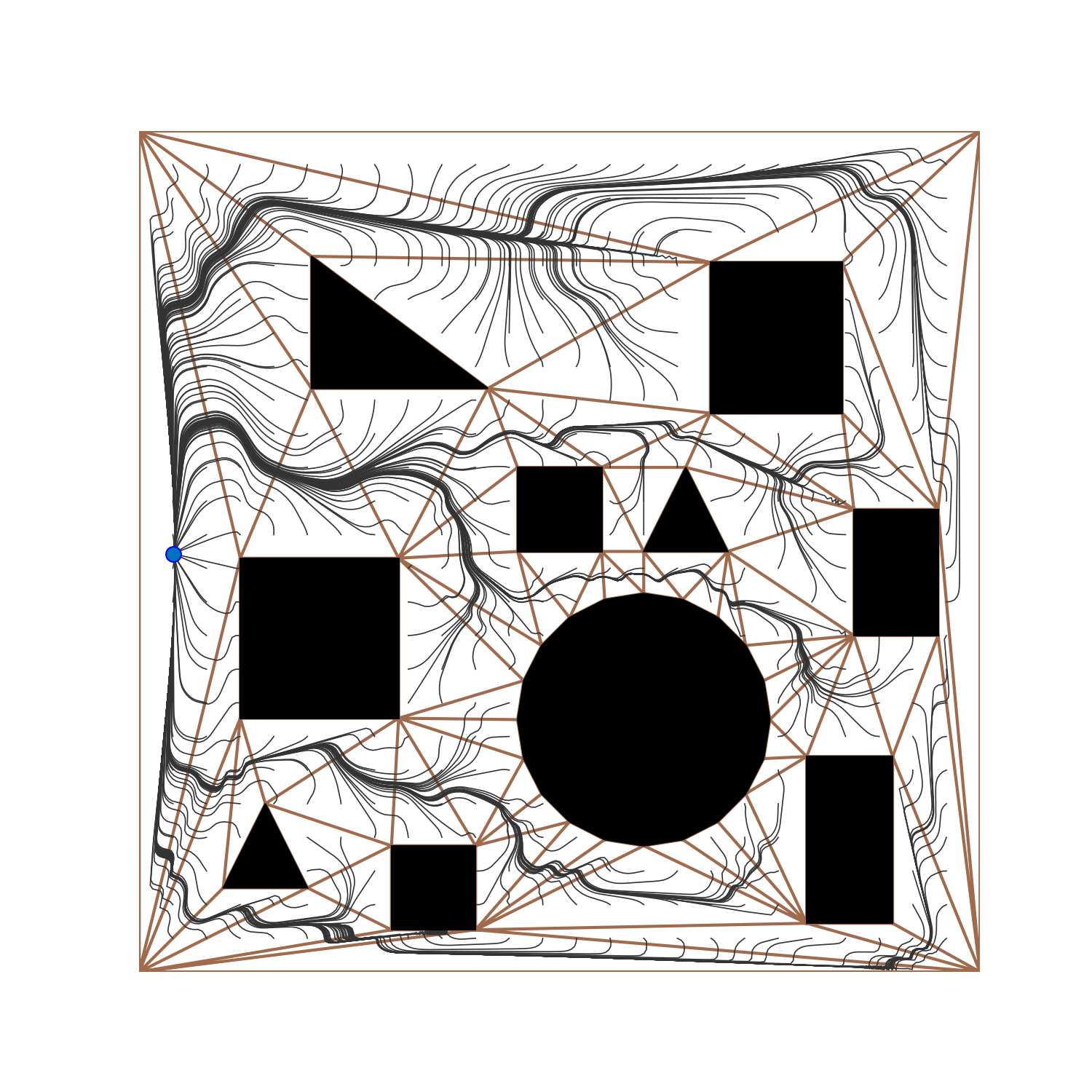}
        \caption{Baseline: Sparse}
        \label{fig:open_base}
    \end{subfigure}
    \hfill 
    \begin{subfigure}[b]{0.48\columnwidth}
        \centering
        \includegraphics[width=\textwidth,trim={2.5cm 2.5cm 2.5cm 2.5cm},clip]{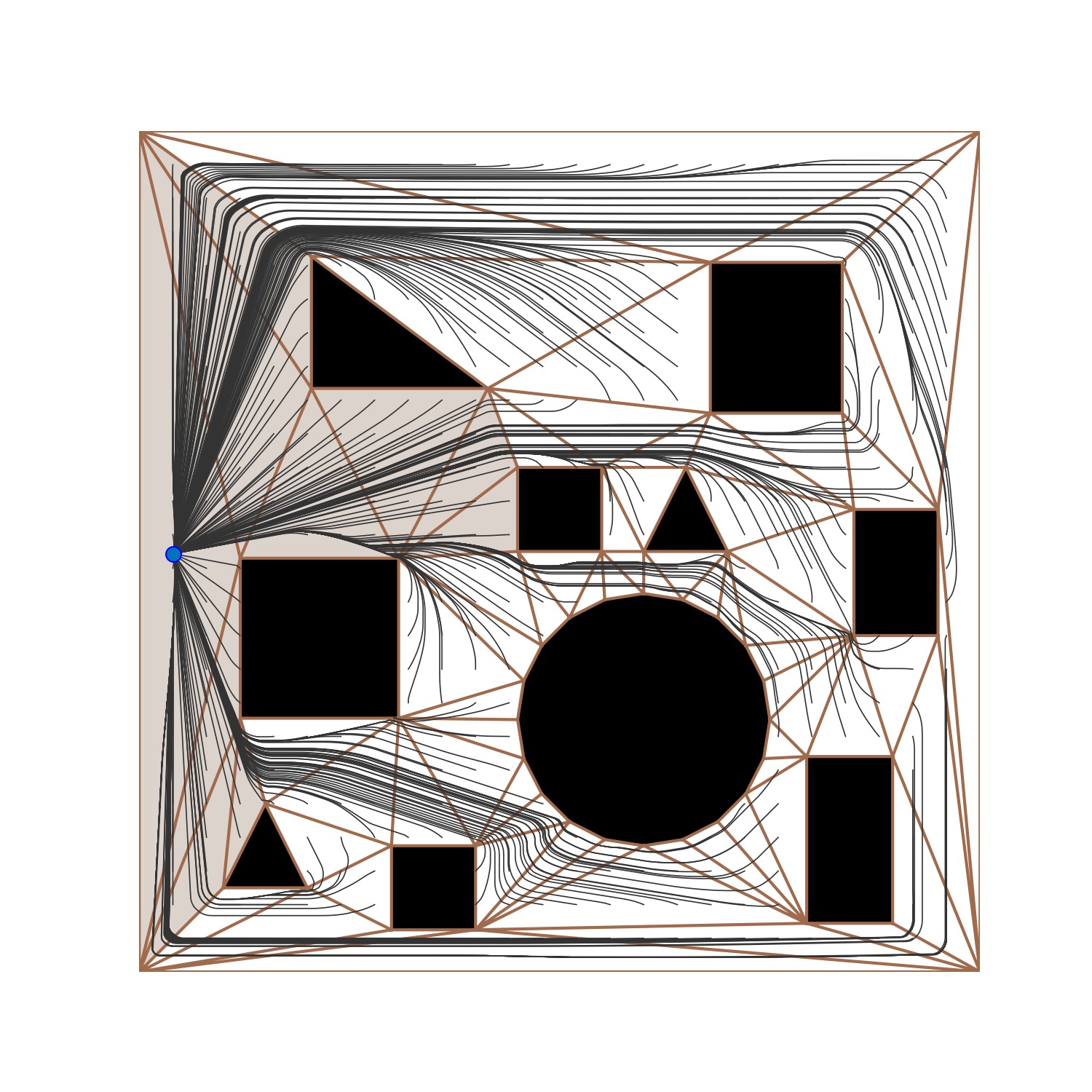}
        \caption{Proposed: Sparse}
        \label{fig:Sparse}
    \end{subfigure}

    \caption{Qualitative comparison of integral curves for the baseline and proposed methods across three environments. The maximal star-shaped region is highlighted in brown. Notably, in the Bug Trap environment, Steiner points were introduced to generate closer to equilateral triangles and avoid skewed triangles. }
    \label{fig:comparison_1x6}
\end{figure}

The results (Table~\ref{tab:results}, Figure~\ref{fig:comparison_1x6}) demonstrate a significant improvement over the baseline method. The primary goal of this work was to address the unnecessary trajectory bending inherent in arbitrary vector field assignments, a problem most directly captured by the total bending and total turning metrics. The proposed method achieved a remarkable 84.33-95.88\% reduction in total bending across all scenarios, with a win rate for total turning reaching 99.89\% in the Bug Trap environment. This geometric smoothness translated directly into improved dynamic performance, as evidenced by a 30.24-59.16\% reduction in LQR control effort, confirming that the generated paths are more energy-efficient and easier for a physical system to follow. These results strongly validate the effectiveness of the heuristic vector alignment and face-vector averaging techniques. 
While the proposed heuristic improves path quality, we cannot guarantee a lower maximum curvature in all cases, particularly when the underlying plan requires sharp changes in cell vector fields, as the heuristic's projection can itself induce a sharp turn.
\subsection{Ablation Study of Star-Shaped Region}
To quantify the specific contribution of the star-shaped region, we performed an ablation study evaluating a comprehensive set of goals (one per simplex) across all three environments, comparing our full method against using only heuristic alignment. The results indicate that while the star-shaped region depends on the goal position, environment, and simplicial complex, it plays an important role in generating smoother integral curves near the goal. Specifically, it reduced total bending by 74.35\% and total turning by 44.07\% in the Sparse environment, with a 34.77\% reduction of total bending in the Maze, and 20.01\% in the Bug Trap. This confirms that the star-shaped region effectively exploits local open space to generate more direct paths.

\subsection{Comparative Analysis}
We evaluated the proposed method against four baselines: RRT*~\cite{karaman2011sampling} (sampling-based), A*~\cite{hart1968formal} (grid-based), FPP~\cite{marcucci2024fast} (optimization-based), and APFs~\cite{khatib1986real} (feedback-based). To demonstrate practical utility in realistic scenarios, we used the Moving AI Lab benchmarks~\cite{sturtevant2012benchmarks}, specifically the ``Boston'' city map, alongside our Maze environment.   Baselines were implemented using PythonRobotics~\cite{sakai2018pythonrobotics} and the \href{https://github.com/cvxgrp/fastpathplanning}{\texttt{FPP}} library with quadtree decomposition. We evaluated more than 10 distinct start-goal configuration pairs for each environment. Due to its stochastic nature, RRT* was executed 10 times per pair (with a maximum of 5200 iterations), and failures were reported if no path was found. As detailed in Table~\ref{tab:comparison_results} and Figure~\ref{fig:comparison_methods}, the proposed method, A*, and FPP achieved 100\% success across both environments, whereas the APFs method struggled to find feasible paths. Although the proposed method can yield slightly longer paths, it operates in the sub-90 ms range, making it an efficient and robust feedback planner. We acknowledge that the performance of baseline methods could potentially be improved through further code optimization or extensive parameter tuning; however, these results reflect their performance using standard, widely accessible implementations.

\begin{table}[h!]
\centering
\caption{Quantitative comparison of different algorithms in two different environments. Results report the mean $\pm$ standard deviation.}
\label{tab:comparison_results}
\resizebox{\columnwidth}{!}{%
\begin{tabular}{lccccc}
\toprule
\textbf{Environment} & \textbf{Method}  & \textbf{Continuity} & \textbf{Success Rate (\%)} & \textbf{Total Time (s)} & \textbf{Path Length (m)} \\
\midrule
\multirow{5}{*}{\textbf{Sparse (Boston)}} 
 & APF  & $C^0$ & 0.0 & N/A & N/A \\
 & A*  & $C^0$ & 100.0 & $0.042 \pm 0.018$ & $9.823 \pm 1.797$ \\
 & RRT*  & $C^0$ & 100.0 & $0.826 \pm 0.696$ & $12.491 \pm 2.314$ \\
 & FPP   & $C^k$ & 100.0 & $3.718 \pm 0.939$ & 9.547 $\pm$ 1.684 \\
 & Proposed  & $C^\infty$ & 100.0 & 0.087 $\pm$ 0.003 & $10.280 \pm 2.219$ \\
\midrule
\multirow{5}{*}{\textbf{Maze}} 
 & APF &  $C^0$ & 0.0 & N/A & N/A \\
 & A* &  $C^0$ & 100.0 & $0.085 \pm 0.058$ & 14.74 $\pm$ 7.67 \\
 & RRT* &  $C^0$ & 80.9 & $0.788 \pm 0.874$ & $14.81 \pm 5.83$ \\
 & FPP  &  $C^k$ & 100.0 & $2.739 \pm 1.618$ & $15.23 \pm 8.06$ \\
 & Proposed &  $C^\infty$ & 100.0 & 0.033 $\pm$ 0.010 & $15.56 \pm 8.19$ \\
\bottomrule
\end{tabular}%
}
\end{table}

\begin{figure}[t!]
    \centering

    \begin{subfigure}[b]{0.49\columnwidth}
        \centering
        \includegraphics[width=0.85\textwidth,trim={0.2cm 0.0cm 0.2cm 0.0cm},clip]{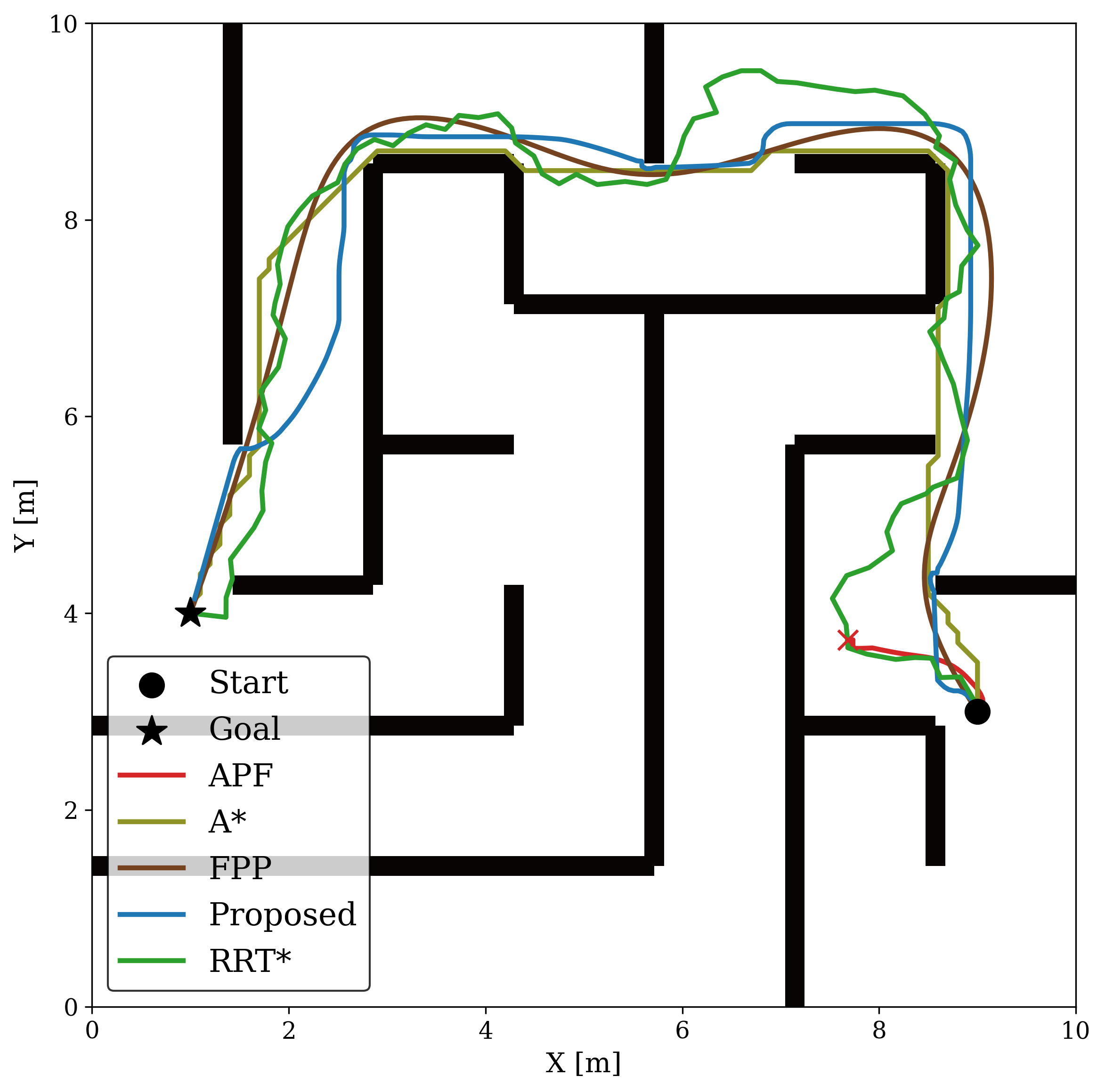}
        \caption{Maze}
    \end{subfigure}
    \hfill 
    \begin{subfigure}[b]{0.49\columnwidth}
        \centering
        \includegraphics[width=0.85\textwidth,trim={0.2cm 0.0cm 0.2cm 0.0cm},clip]{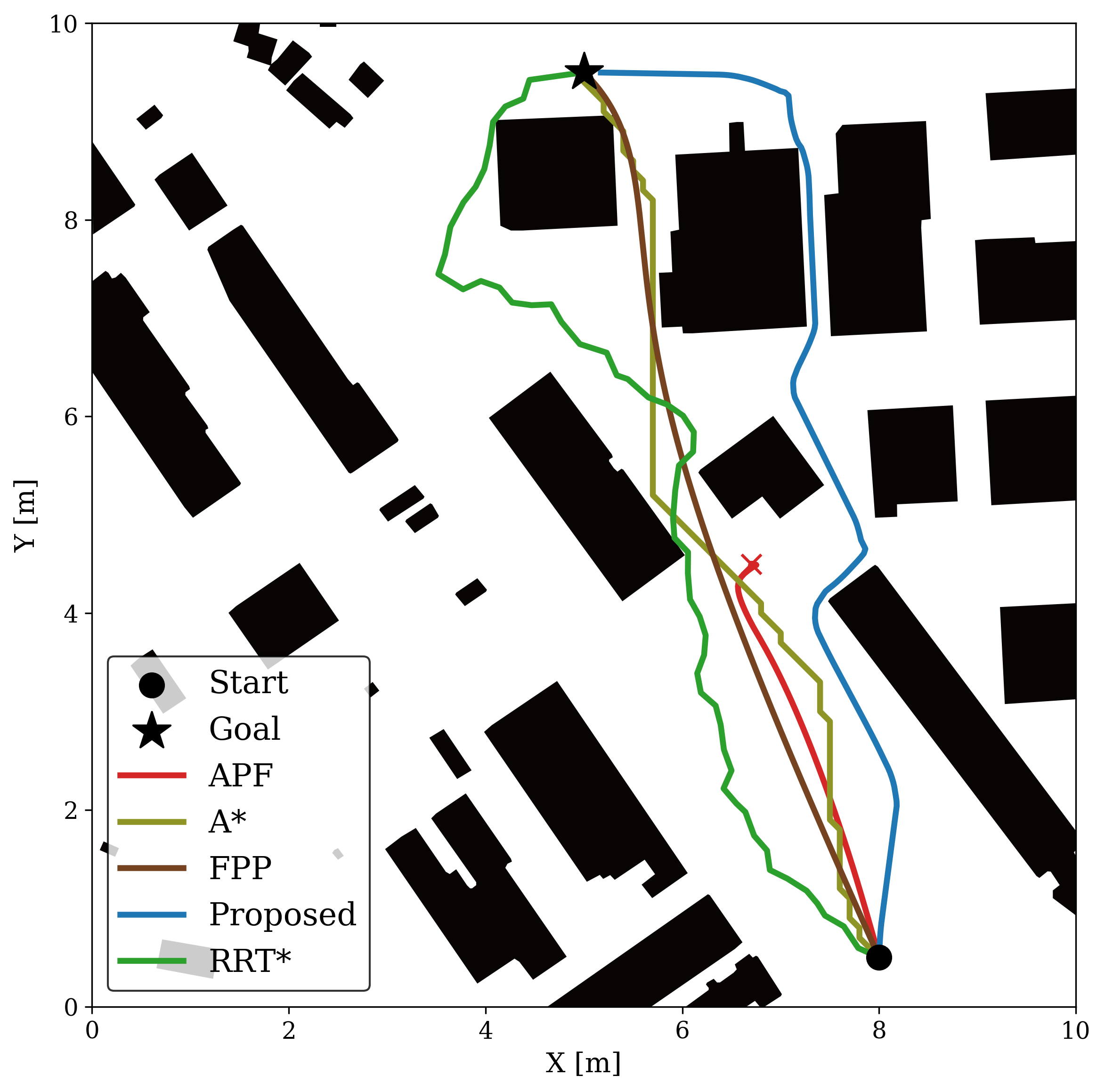}
        \caption{Sparse (Boston)}
    \end{subfigure}

    \caption{Qualitative comparison of paths for the baselines across the maze and sparse environments.}
    \label{fig:comparison_methods}
\end{figure}

\subsection{Computational Efficiency} 
Our method splits the computation into offline and online phases for efficiency. The offline phase includes simplicial decomposition, which in 2D (for polygonal obstacles and by CDT), can be performed in $O(n \log n)$ time, where $n$ is the number of obstacle vertices~\cite{shewchuk1996triangle}. 

A discrete plan can be computed via Dijkstra's algorithm on the connectivity graph. The successor of each simplex can be found in $O(m\log m)$ time, in which $m$ is the number of $d$-simplexes. However, recent theoretical advances have produced even faster algorithms for this problem~\cite{duan2025breaking}. Heuristic alignment and star-shaped funnel construction add minimal linear overhead via local geometric checks per simplex. Online execution is lightweight. If the robot's current cell is unknown, a point-location query can be performed in logarithmic time (in 2D) with preprocessing, but can be more expensive in higher dimensions~\cite{toth2017handbook, lindemann2009simple}. In a given $d$-simplex, the computation is minimal, requiring only distance checks to its $d+1$ faces followed by the vector field blending.

On a system with an AMD Ryzen 9 8945HS CPU and 32\,GB of RAM, using the \href{https://www.cs.cmu.edu/~quake/triangle.html}{\texttt{triangle}} library for CDT~\cite{shewchuk1996triangle}, the offline precomputation (including triangulation, discrete plan, heuristic alignment, and funnel construction) required about 6-9\,ms for the standard environments (Maze, Bug Trap, Sparse). For larger and more complex environments, such as the ``Boston'' benchmark, the precomputation time naturally scales with the number of obstacle vertices and the resulting simplexes. Although the implementation could be further optimized, for planar navigation, the system can handle dynamic environments via global recomputation at about 100 Hz. The online execution is lightweight; evaluating the vector field $V(x)$ at any point $x$ took on average about 0.07\,ms, which allows the robot to find a feedback policy in real time (about $15$ kHz update rate). We acknowledge that for higher-dimensional $\C $, precomputation time would increase, and local mesh repair (if the whole environment does not change) can be a valuable approach.

\section{Conclusion and Future Work}
We introduced a computationally efficient method to reduce unnecessary bending in feedback motion plans on a simplicial complex embedded in $\Cfree$. We acknowledge that computing a simplicial complex in higher dimensions can be intractable. However, for $d \le3$, algorithms like CDT (e.g. triangle~\cite{shewchuk1996triangle} for 2D, Tetgen~\cite{10.1145/2629697} for 3D) exist, making this method effective and tractable for mobile robots and UAVs. Our approach combines a novel heuristic for aligning local vector fields with a geometric algorithm that constructs a maximal star-shaped funnel around the goal, resulting in more efficient trajectories with significantly lower bending. A natural extension, forming a promising direction for future research, is to cover the entire free space with a set of such star-shaped regions, each with its own local waypoint. This could lead to a highly efficient, global feedback planning framework. Further work will also focus on adapting the alignment heuristics for systems with nonholonomic constraints and validating the framework on physical robot platforms. Furthermore, a promising direction involves co-optimizing the discrete cell-path with our vector field alignment to achieve more globally optimal trajectories.

\section{Acknowledgment}
We thank Hannah Erickson and Ba\c{s}ak Sak\c{c}ak for helpful discussions and feedback.

\bibliographystyle{IEEEtran}
\bibliography{sample-base}
%\addtolength{\textheight}{-12cm} 

\end{document}